\newif\ifcomments
\newcommand{\comments}[1]{#1}
\newcommand{\comments}[1]{}
\DeclareMathOperator*{\argmax}{arg\,max}
\newtheorem{theorem}{Theorem}[subsection]
\icmltitlerunning{Estimating Q(s,s') with Deep Deterministic Dynamics Gradients}
\begin{document}

\twocolumn[
\icmltitle{Estimating $Q(s,s')$ with Deep Deterministic Dynamics Gradients}



\icmlsetsymbol{equal}{*}

\begin{icmlauthorlist}
\icmlauthor{Ashley D. Edwards}{uber}
\icmlauthor{Himanshu Sahni}{gt}
\icmlauthor{Rosanne Liu}{uber,dc}
\icmlauthor{Jane Hung}{uber}
\icmlauthor{Ankit Jain}{uber}
\icmlauthor{Rui Wang}{uber}
\icmlauthor{Adrien Ecoffet}{uber}
\icmlauthor{Thomas Miconi}{uber}
\icmlauthor{Charles Isbell}{gt,equal}
\icmlauthor{Jason Yosinski}{uber,dc,equal}
\end{icmlauthorlist}

\icmlaffiliation{dc}{ML Collective}
\icmlaffiliation{uber}{Uber AI Labs}
\icmlaffiliation{gt}{Georgia Institute of Technology, Atlanta, GA, USA}

\icmlcorrespondingauthor{Ashley D. Edwards}{ashedw88@gmail.com}

\icmlkeywords{reinforcement learning, bellman, imitation, transfer, model-based, values, policy gradient}

\vskip 0.3in
]



\printAffiliationsAndNotice{\icmlEqualContribution} 

\begin{abstract}
In this paper, we introduce a novel form of value function, $Q(s, s')$, that expresses the utility of transitioning from a state $s$ to a neighboring state $s'$ and then acting optimally thereafter. In order to derive an optimal policy, we develop a forward dynamics model that learns to make next-state predictions that maximize this value. This formulation decouples actions from values while still learning off-policy. We highlight the benefits of this approach in terms of value function transfer, learning within redundant action spaces, and learning off-policy from state observations generated by sub-optimal or completely random policies. Code and videos are available at \url{http://sites.google.com/view/qss-paper}.
\end{abstract}
\section{Introduction}
The goal of reinforcement learning is to learn how to act so as to maximize long-term reward. A solution is usually formulated as finding the optimal policy, {\it i.e.}, selecting the optimal action given a state. A popular approach for finding this policy is to learn a function that defines values though actions, $Q(s,a)$, where $\max_a Q(s,a)$ is a state's value and $\argmax_a Q(s,a)$ is the optimal action \cite{sutton1998reinforcement}. We will refer to this approach as QSA. 

Here, we propose an alternative formulation for off-policy reinforcement learning that defines values solely through states, rather than actions. In particular, we introduce $Q(s, s’)$, or simply QSS, which represents the value of transitioning from one state $s$ to a neighboring state $s' \in N(s)$ and then acting optimally thereafter:
\begin{align}
Q(s,s') = r(s, s') + \gamma \max_{s'' \in N(s')}  Q(s',s''). \nonumber
\end{align}

\begin{figure}[t]
    \centering
    \includegraphics[width=.47\linewidth]{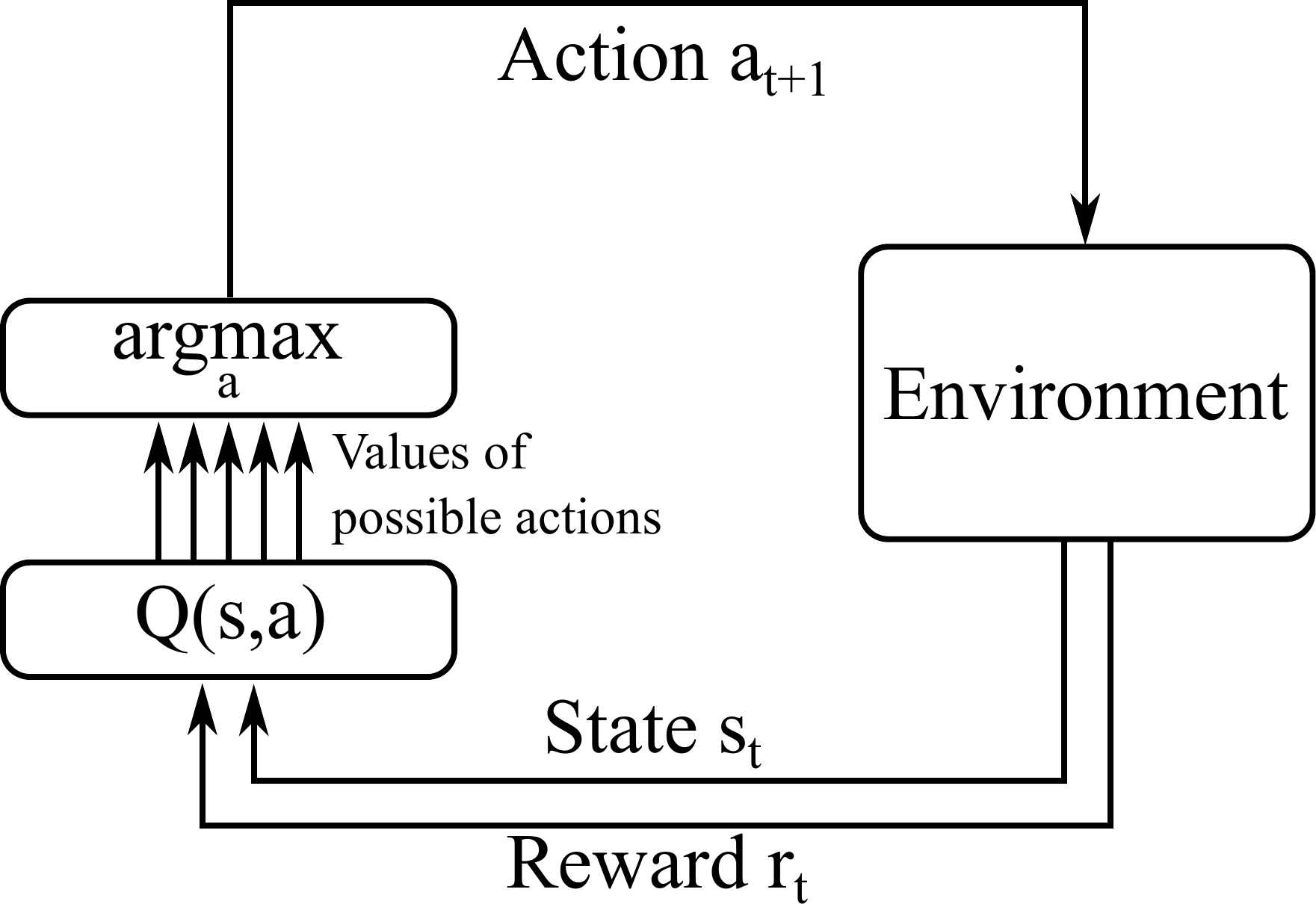}
    \hspace{.04\linewidth}
    \includegraphics[width=.47\linewidth]{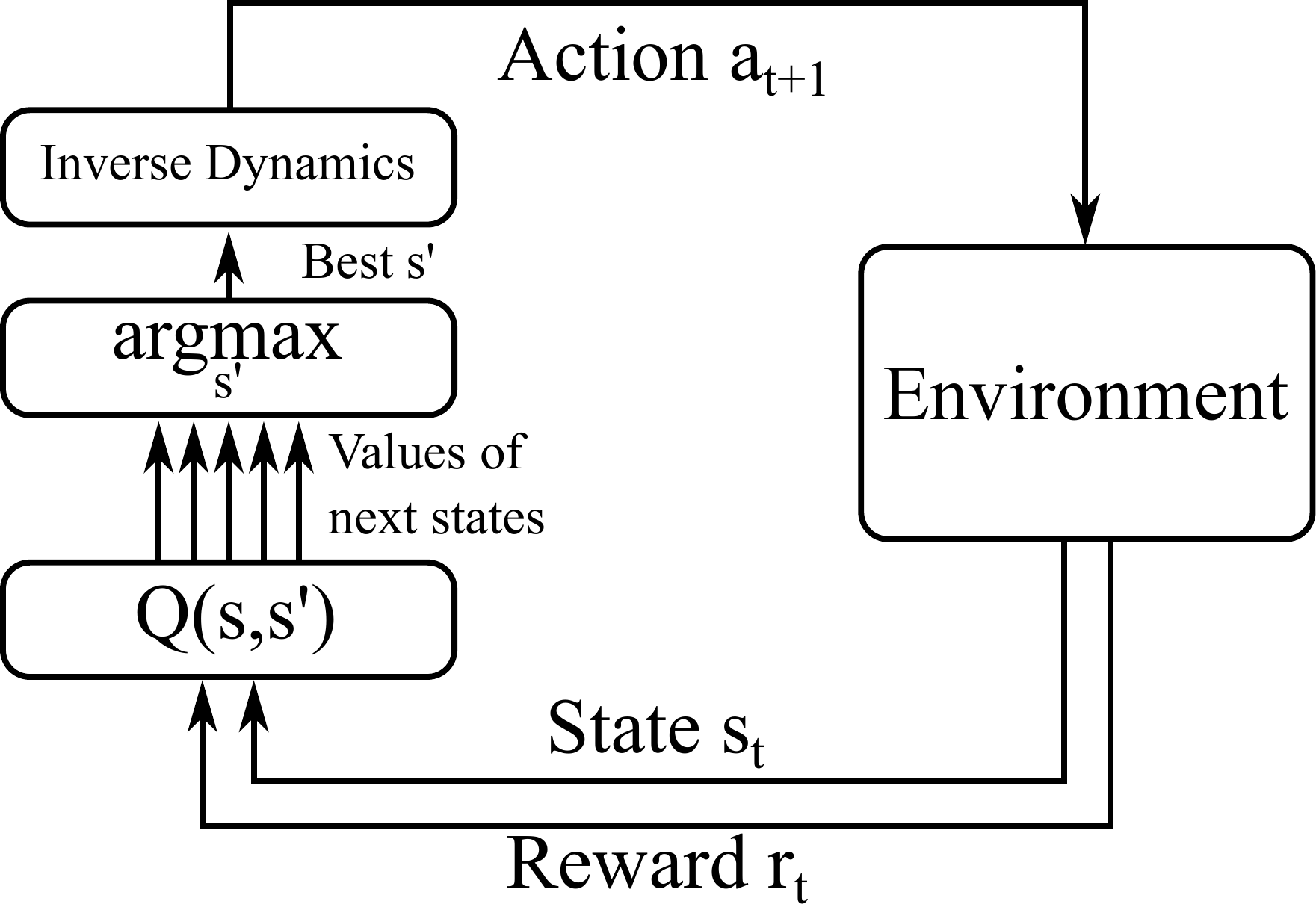} \\
    (a) Q  or QSA-learning \hspace{.15\linewidth} (b) QSS-learning \hspace{.05\linewidth}
    \caption{Formulation for \textbf{(a)} Q-learning, or QSA-learning vs. \textbf{(b)} QSS-learning. Instead of proposing an action, a QSS agent proposes a state, which is then fed into an inverse dynamics model that determines the action given the current state and next state proposal. The environment returns the next observation and reward as usual after following the action.}
    \label{fig:rl} 
\end{figure}
In this formulation, instead of proposing an action, the agent proposes a desired next state, which is fed into an inverse dynamics model that outputs the appropriate action to reach it (see Figure~\ref{fig:rl}).
We demonstrate that this formulation has several advantages. First, redundant actions that lead to the same transition are simply folded into one value estimate. Further, by removing actions, QSS becomes easier to transfer than a traditional Q function in certain scenarios, as it only requires learning an inverse dynamics function upon transfer, rather than a full policy or value function.  
Finally, we show that QSS can learn policies purely from \emph{observations} of (potentially sub-optimal) demonstrations with no access to demonstrator actions. Importantly, unlike other imitation from observation approaches, because it is off-policy, QSS can learn highly efficient policies even from sub-optimal or completely random demonstrations.

In order to realize the benefits of off-policy QSS, we must obtain value maximizing future state proposals without performing explicit maximization. There are two problems one would encounter in doing so. The first is that a set of neighbors of $s$ are not assumed to be known a priori. This is unlike the set of actions in discrete QSA which are assumed to be provided by the MDP. Secondly, for continuous state and action spaces, the set of neighbors may be infinitely many, so maximizing over them explicitly is out of the question. To get around this difficulty, we draw inspiration from Deep Deterministic Policy Gradient (DDPG)~\cite{lillicrap2015continuous}, which learns a policy $\pi(s) \rightarrow a$ over continuous action spaces that maximizes $Q(s,\pi(s))$. We develop the analogous Deep Deterministic Dynamics Gradient (D3G), which trains a forward dynamics model $\tau(s) \rightarrow s'$ to predict next states that maximize $Q(s,\tau(s))$. Notably, this model is not conditioned on actions, and thus allows us to train QSS completely off-policy from observations alone.

We begin the next section by formulating QSS, then describe its properties within tabular settings. We will then outline the case of using QSS in continuous settings, where we will use D3G to train $\tau(s)$. We evaluate in both tabular problems and MuJoCo tasks~\cite{todorov2012mujoco}.

\section{The QSS formulation for RL}
We are interested in solving problems specified through a Markov Decision Process, which consists of states $s \in S$, actions $a \in A$, rewards $r(s, s') \in R$, and a transition model $T(s, a, s')$ that indicates the probability of transitioning to a specific next state given a current state and action, $P(s' | s, a)$~\cite{sutton1998reinforcement}\footnote{We use $s$ and $s'$ to denote states consecutive in time, which may alternately be denoted $s_t$ and $s_{t+1}$.}. For simplicity, we refer to all rewards $r(s,s')$ as $r$ for the remainder of the paper. Importantly, we assume that the reward function does not depend on actions, which allows us to formulate QSS values without any dependency on actions. 

Reinforcement learning aims to find a policy $\pi(a|s)$ that represents the probability of taking action $a$ in state $s$. We are typically interested in policies that maximize the long-term discounted return $R=\sum_{k=t}^H \gamma^{k-t} r_k$, where $\gamma$ is a discount factor that specifies the importance of long-term rewards and $H$ is the terminal step.

Optimal QSA values express the expected return for taking action $a$ in state $s$ and acting optimally thereafter:
\begin{align*}
    Q^*(s,a) = \mathbb{E}[r + \gamma \max_{a'} Q^*(s',a')|s,a].
\end{align*}
These values can be approximated using an approach known as Q-learning~\cite{watkins1992q}:
\begin{align*}
    Q(s,a) \leftarrow Q(s,a) + \alpha [r + \gamma \max_{a'} Q(s',a') - Q(s,a)].
\end{align*}
Finally, QSA learned policies can be formulated as:
\begin{align*}
\pi(s) = \argmax_a Q(s,a).
\end{align*}

We propose an alternative paradigm for defining optimal values, $Q^*(s, s')$, or the value of transitioning from state $s$ to state $s'$ and acting optimally thereafter. By analogy with the standard QSA formulation, we express this quantity as:
\begin{equation}
    Q^*(s,s') = r + \gamma \max_{s'' \in N(s')} Q^*(s', s'').
\end{equation}
Although this equation may be applied to any environment, for it to be a useful formulation, the \textit{environment must be deterministic}.
To see why, note that in QSA-learning, the max is over actions, which the agent has perfect control over, and any uncertainty in the environment is integrated out by the expectation.
In QSS-learning the max is over next states, which in stochastic environments are not perfectly predictable. In such environments the above equation does faithfully track a certain value, but it may be considered the ``best possible scenario value'' --- the value of a current and subsequent state assuming that any stochasticity the agent experiences turns out as well as possible for the agent. Concretely, this means we assume that the agent can transition reliably (with probability 1) to any state $s'$ that it is possible (with probability $>$ 0) to reach from state $s$.

Of course, this will not hold for stochastic domains in general, in which case QSS-learning does not track an actionable value. While this limitation may seem severe, we will demonstrate that the QSS formulation affords us a powerful tool for use in deterministic environments, which we develop in the remainder of this article. Henceforth we assume that the transition function is deterministic, and the empirical results that follow show our approach to succeed over a wide range of tasks.

\subsection{Bellman update for QSS}
We first consider the simple setting where we have access to an inverse dynamics model $I(s,s') \rightarrow a$ that returns an action $a$ that takes the agent from state $s$ to $s'$. We also assume access to a function $N(s)$ that outputs the neighbors of $s$. We use this as an illustrative example and will later formulate the problem without these assumptions. 

We define the Bellman update for QSS-learning as:
\begin{align}
    Q(s,s') \leftarrow Q(s,s') + \alpha [r + \gamma \max_{s'' \in N(s)} Q(s',s'') - Q(s,s')].
    \label{eqn:qss_learning}
\end{align}
Note $Q(s,s')$ is undefined when $s$ and $s'$ are not neighbors.
In order to obtain a policy, we define $\tau(s)$ as a function that selects a neighboring state from $s$ that maximizes QSS:
\begin{equation}
    \tau(s) = \argmax_{s' \in N(s)} Q(s,s').
\end{equation}
In words, $\tau(s)$ selects states that have large value, and acts similar to a policy over states.  In order to obtain the policy over actions, we use the inverse dynamics model: 
\begin{equation}
    \pi(s) = I(s, \tau(s)).
\label{eqn:inverse_dynamics}
\end{equation}
This approach first finds the state $s'$ that maximizes $Q(s,s')$, and then uses $I(s,s')$ to determine the action that will take the agent there. We can rewrite Equation~\ref{eqn:qss_learning} as:
\begin{equation}
    Q(s,s') = Q(s,s') + \alpha [r + \gamma Q(s', \tau(s')) - Q(s,s')].
\end{equation}
\begin{figure}[t]
    \centering
      \begin{subfigure}{.32\linewidth}
  	\centering
    \includegraphics[width=\linewidth]{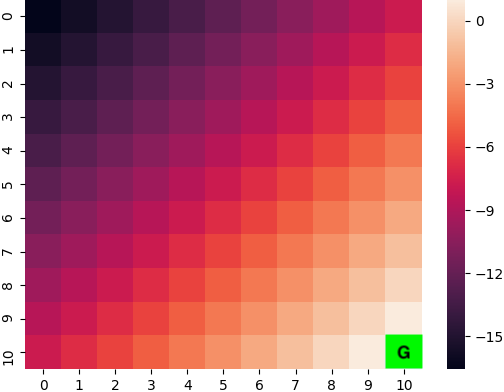}
    \caption{$\max\limits_a Q(s,a)$}
  \end{subfigure}
   \begin{subfigure}{.32\linewidth}
  	\centering
    \includegraphics[width=\linewidth]{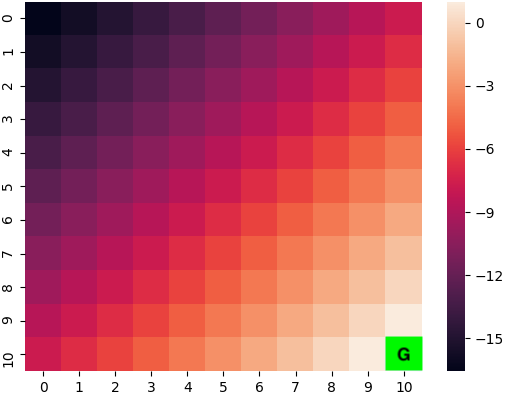}
    \caption{$\max\limits_{s'} Q(s,s')$}
    \label{fig:model_q}
  \end{subfigure}
    \centering
      \begin{subfigure}{.32\linewidth}
  	\centering
    \includegraphics[width=\linewidth]{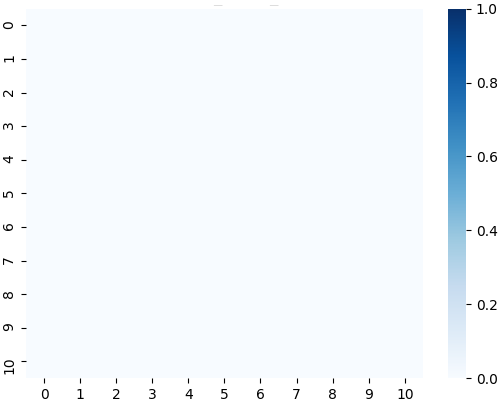}
    \caption{$\frac{QSS - QSA}{|QSS|}$}
  \end{subfigure}
    \caption{Learned values for tabular Q-learning in an 11x11 gridworld. The first two figures show a heatmap of Q-values for QSA and QSS. The final figure represents the fractional difference between the learned values in QSA and QSS.}
    \label{fig:heatmap}
\end{figure}
\subsection{Equivalence of $Q(s,a)$ and $Q(s,s')$}
Let us now investigate the relation between values learned using QSA and QSS. 

\begin{theorem}
QSA and QSS learn equivalent values in the deterministic setting.
\end{theorem}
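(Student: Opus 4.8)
The plan is to make the statement precise by exhibiting an explicit correspondence between state--action pairs and state--neighbor pairs, then showing the two value functions agree under it. Since the environment is deterministic, let $f(s,a)$ denote the unique successor state reached by taking action $a$ in state $s$, and observe that the neighbor set is exactly $N(s) = \{f(s,a) : a \in A\}$. The precise claim I would prove is $Q^*(s,a) = Q^*(s, f(s,a))$ for every state--action pair, from which equivalence of the induced state values and greedy policies follows.

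First I would note that in the deterministic setting the QSA optimality equation collapses, since the expectation over successors degenerates to the single state $f(s,a)$:
\[
Q^*(s,a) = r + \gamma \max_{a'} Q^*(f(s,a), a').
\]
Then, taking $Q^*(s,s')$ to be the unique fixed point of the QSS Bellman operator, I would define the candidate $g(s,a) := Q^*(s, f(s,a))$ and substitute it into the QSS equation. Writing $s' = f(s,a)$ and using $N(s') = \{f(s',a') : a' \in A\}$, the maximization over neighbors rewrites as one over actions,
\[
\max_{s'' \in N(s')} Q^*(s', s'') = \max_{a'} Q^*(s', f(s',a')) = \max_{a'} g(s', a'),
\]
so that $g$ obeys exactly the collapsed QSA equation above.

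Finally, I would invoke uniqueness of the fixed point of the QSA Bellman operator (a $\gamma$-contraction for $\gamma<1$) to conclude that $g$ equals the optimal QSA value, i.e. $Q^*(s, f(s,a)) = Q^*(s,a)$. Maximizing over actions on both sides and again using $N(s) = \{f(s,a) : a \in A\}$ gives $\max_a Q^*(s,a) = \max_{s' \in N(s)} Q^*(s,s')$, so the state values coincide and the greedy policies agree once the action is recovered through the inverse dynamics model $\pi(s) = I(s, \tau(s))$. An equivalent but more elementary route is an induction over value-iteration sweeps, establishing $Q_k(s,a) = Q_k(s, f(s,a))$ for all $k$ with the same max-rewriting in the inductive step.

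I expect the crux to be this max-rewriting, which is precisely where determinism is indispensable: the deterministic transition both collapses the QSA expectation to a single successor and makes the identification $N(s) = \{f(s,a) : a\in A\}$ faithful, so that maximizing over next states is interchangeable with maximizing over actions. A minor point to dispatch cleanly is that $f(\cdot,\cdot)$ need not be injective---distinct actions may share a successor---but this merely folds redundant actions onto a single QSS value, which is consistent with the equivalence rather than an obstacle to it.
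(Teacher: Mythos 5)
Your proof is correct, but it takes a different route from the paper's. The paper argues via a sub-MDP sandwich: it asserts the identification $Q(s,s') = Q(s, I(s,s'))$ through the inverse dynamics model, notes that QSS thereby solves a sub-MDP of the original (giving some $a$ with $Q(s,a) \ge \max_{s'} Q(s,s')$), and then argues that any action omitted by $I$ is redundant, so $Q(s,a) \le \max_{s'} Q(s, I(s,s'))$ for all $a$; the two inequalities yield $\max_{s'} Q(s,s') = \max_a Q(s,a)$, and nothing stronger. You instead work with the forward dynamics $f(s,a)$, define the candidate $g(s,a) := Q^*(s, f(s,a))$, verify that $g$ satisfies the collapsed deterministic QSA Bellman equation via the rewriting $N(s') = \{f(s',a') : a' \in A\}$, and invoke uniqueness of the fixed point of the ($\gamma$-contractive) QSA operator. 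This buys you two things the paper's argument does not make explicit: a genuinely \emph{pointwise} identity $Q^*(s,a) = Q^*(s, f(s,a))$ (the paper's $Q(s,s') = Q(s,I(s,s'))$ is asserted rather than derived, and your identity implies it), and a clean treatment of non-injective $f$, which you correctly observe merely folds redundant actions onto one QSS value --- precisely the property the paper exploits experimentally. The paper's version is shorter and ties directly to the algorithmic role of $I(s,s')$, while yours is the more rigorous argument and, via the value-iteration induction you sketch as an alternative, extends to finite-horizon or convergence-of-iterates statements where appealing to the fixed point alone would not suffice. One small caveat: your contraction step needs bounded rewards and $\gamma < 1$ (or the episodic termination the paper assumes), which you implicitly acknowledge but should state when writing this up.
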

\begin{proof}
Consider an MDP with a deterministic state transition function and inverse dynamics function $I(s, s')$. QSS can be thought of as equivalent to using QSA to solve the sub-MDP containing only the set of actions returned by $I(s, s')$ for every state $s$:
\begin{equation}
    Q(s, s') = Q(s, I(s, s'))
    \nonumber
\end{equation}
Because the MDP solved by QSS is a sub-MDP of that solved by QSA, there must always be at least one action $a$ for which $Q(s, a) \ge \max_{s'} Q(s, s')$.

The original MDP may contain additional actions not returned by $I(s, s')$, but following our assumptions, their return must be less than or equal to that by the action $I(s, s')$. Since this is also true in every state following $s$, we have:
\begin{equation}
    Q(s, a) \le \max_{s'} Q(s, I(s, s'))\quad\text{for all }a
    \nonumber
\end{equation}
Thus we obtain the following equivalence between QSA and QSS for deterministic environments:
\begin{equation}
    \max_{s'} Q(s, s') = \max_a Q(s, a) 
    \nonumber
\end{equation}
This equivalence will allow us to learn accurate action-values without dependence on the action space.
\end{proof} 
\begin{figure}[t]
    \centering
      \begin{subfigure}{.335\linewidth}
  	\centering
    \includegraphics[width=\linewidth]{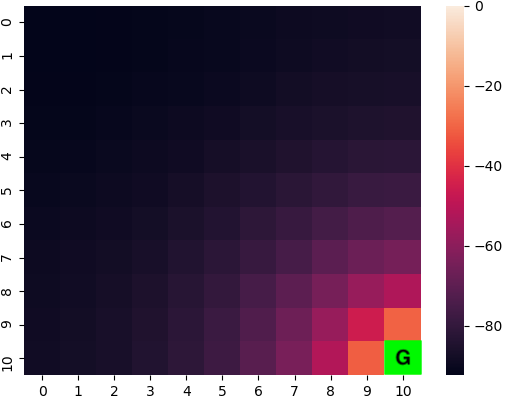}
    \caption{$\max\limits_a Q(s,a)$}
    \label{fig:stochastic_vanilla_q}
  \end{subfigure}
    \begin{subfigure}{.335\linewidth}
  	\centering
    \includegraphics[width=\linewidth]{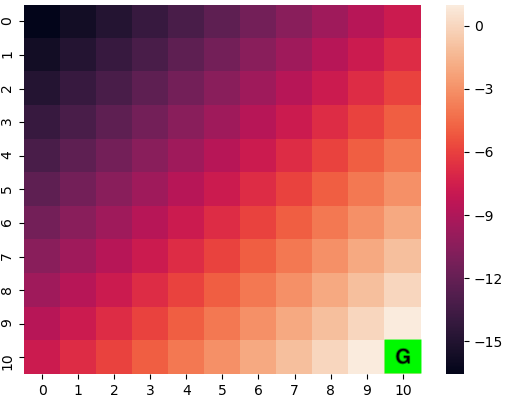}
    \caption{$\max\limits_{s'} Q(s,s')$}
    \label{fig:stochastic_model_q}
  \end{subfigure}
  \centering
      \begin{subfigure}{.3\linewidth}
  	\centering
    \includegraphics[width=\linewidth]{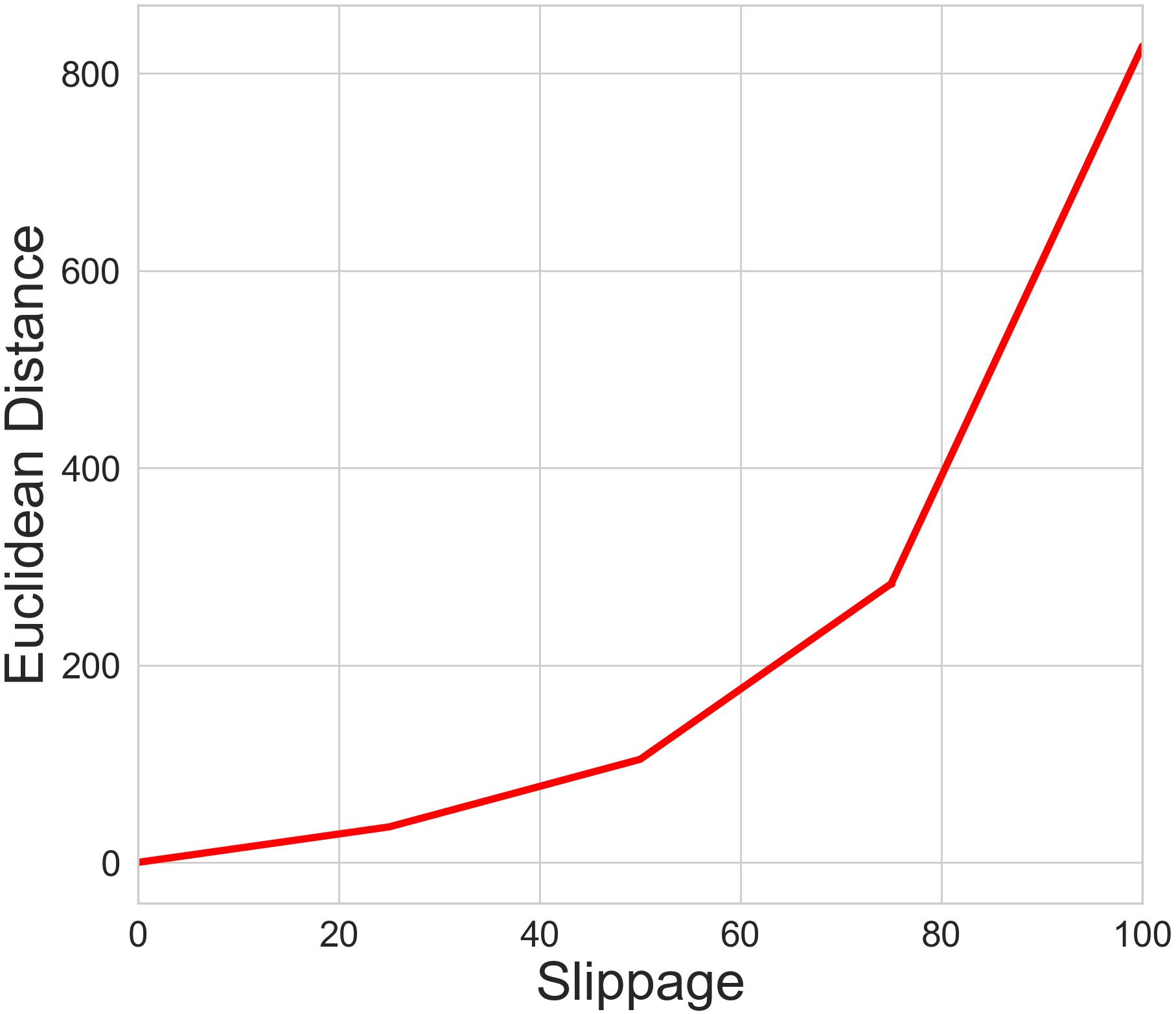}
    \caption{Value distance}
    \label{fig:euclidean_distance}
  \end{subfigure}
    \caption{Learned values for tabular Q-learning in an 11x11 gridworld with stochastic transitions. The first two figures show a heatmap of Q-values for QSA and QSS in a gridworld with 100\% slippage. The final figure represents the euclidean distance between the learned values in QSA and QSS as the transitions become more stochastic (averaged over 10 seeds  with 95\% confidence intervals).}
     \label{fig:stochastic_values}
\end{figure}
\section{QSS in tabular settings}
\begin{figure*}[t]
    \centering
      \begin{subfigure}{.24\linewidth}
  	\centering
    \includegraphics[width=\linewidth]{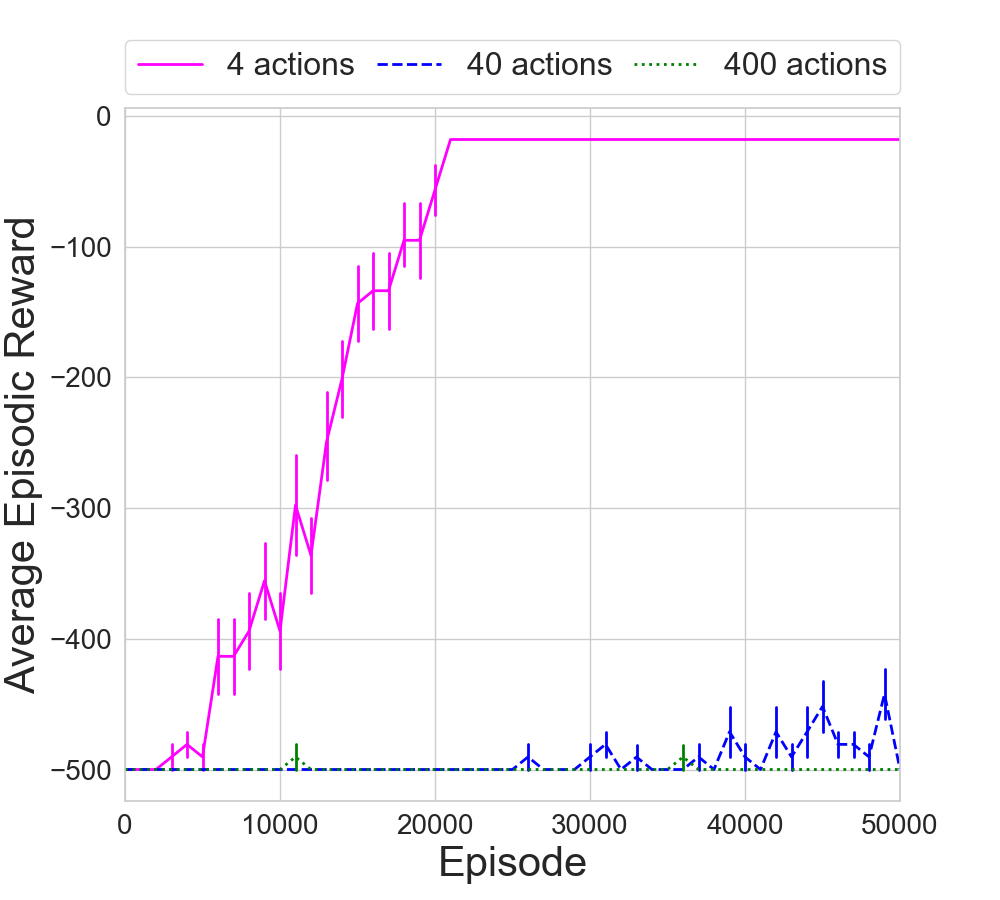}
    \caption{QSA}
    \label{fig:vanilla_actions}
  \end{subfigure}
  \centering
      \begin{subfigure}{.24\linewidth}
  	\centering
    \includegraphics[width=\linewidth]{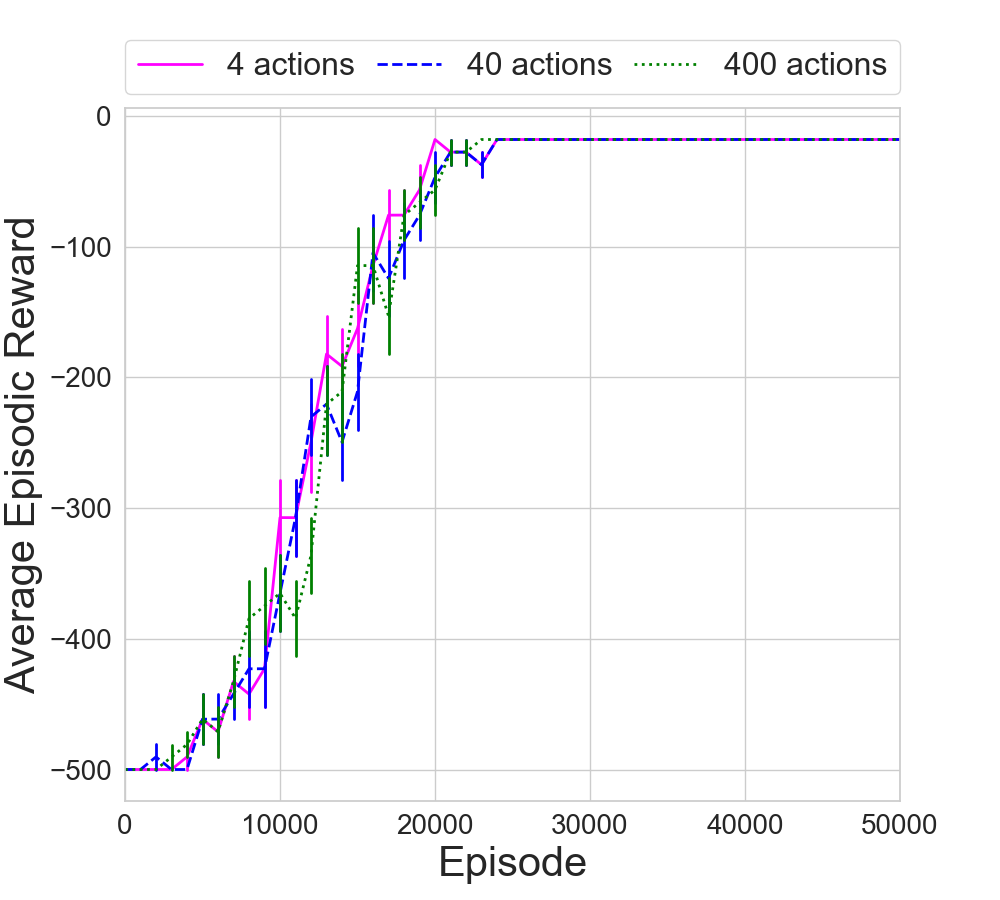}
    \caption{QSS}
    \label{fig:model_actions}
  \end{subfigure}
   \begin{subfigure}{.24\linewidth}
  	\centering
    \includegraphics[width=\linewidth]{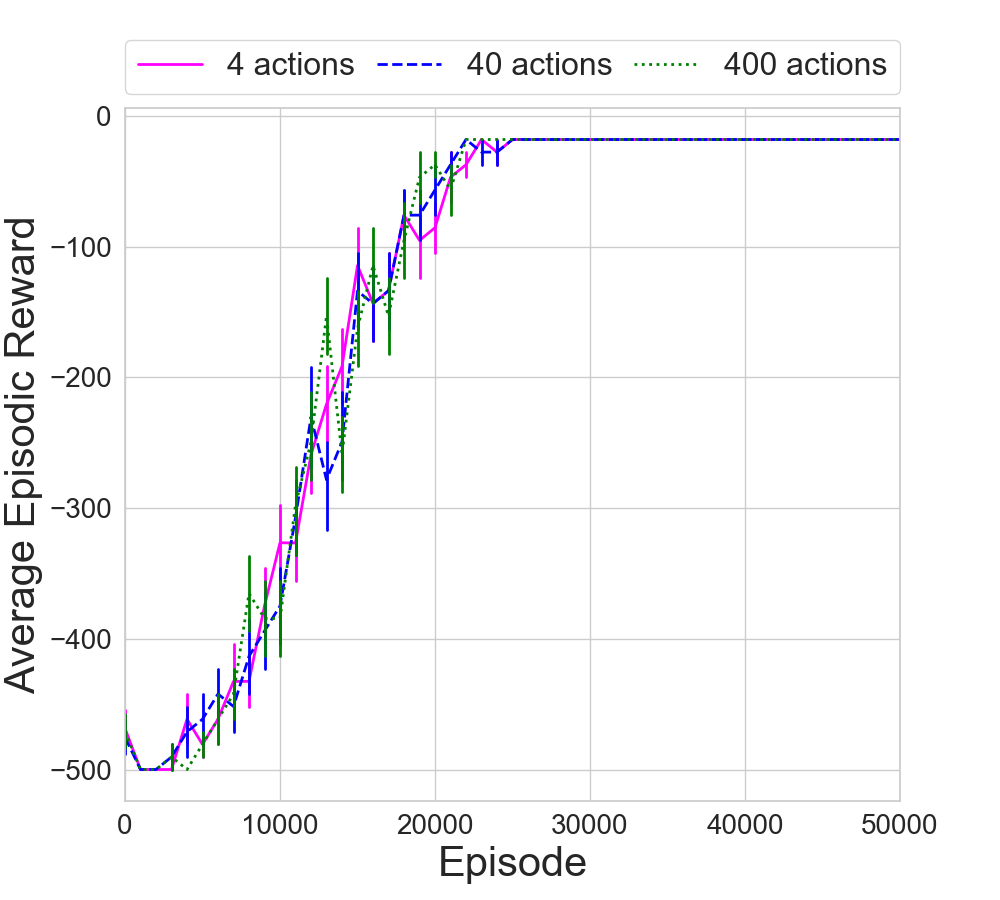}
    \caption{QSS + inverse dynamics}
    \label{fig:id_actions}
  \end{subfigure}
        \begin{subfigure}{.24\linewidth}
  	\centering
    \includegraphics[width=\linewidth]{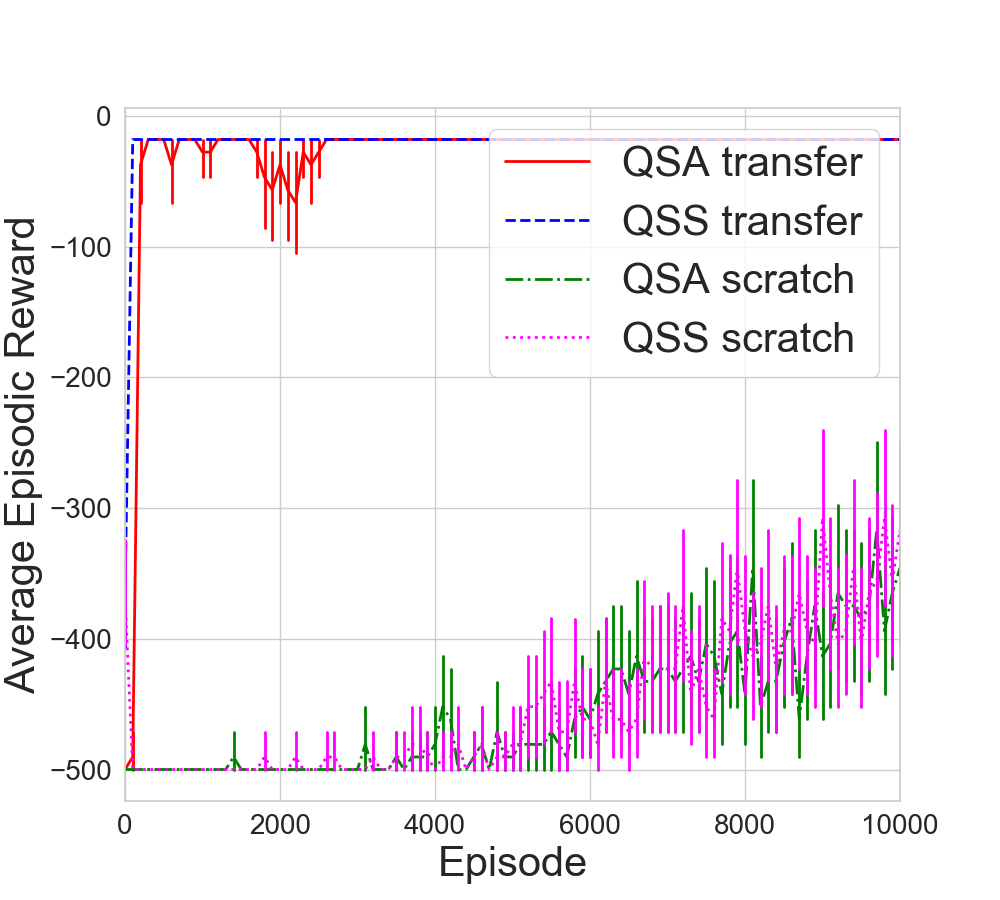}
    \caption{Transfer of permuted actions}
    \label{fig:transfer_actions}
  \end{subfigure}
    \caption{Tabular experiments in an 11x11 gridworld. The first three experiments demonstrate the effect of redundant actions in QSA, QSS, and QSS with learned inverse dynamics. The final experiment represents how well QSS and QSA transfer to a gridworld with permuted actions. All experiments shown were averaged over 50 random seeds with 95\% confidence intervals.}
     \label{fig:redundant_actions}
\end{figure*}
In simple settings where the state space is discrete, $Q(s,s')$ can be represented by a table. We use this setting to highlight some of the properties of QSS. In each experiment, we evaluate within a simple 11x11 gridworld where an agent, initialized at $\langle 0, 0 \rangle$, navigates in each cardinal direction and receives a reward of $-1$ until it reaches the goal. 

\subsection{Example of equivalence of QSA and QSS}
We first examine the values learned by QSS (Figure~\ref{fig:heatmap}). The output of QSS increases as the agent gets closer to the goal, which indicates that QSS learns meaningful values for this task. Additionally, the difference in value between $\max_a Q(s, a)$ and $\max_{s'} Q(s, s')$ approaches zero as the values of QSS and QSA converge. Hence, QSS learns similar values as QSA in this deterministic setting.

\subsection{Example of QSS in a stochastic setting}
The next experiment measures the impact of stochastic transitions on learned QSS values. To investigate this property, we add a probability of slipping to each transition, where the agent takes a random action (i.e. slips into an unintended next state) some percentage of time. First, we notice that the values learned by QSS when transitions have 100\% slippage (completely random actions) are quite different from those learned by QSA (Figure~\ref{fig:stochastic_values}\subref{fig:stochastic_vanilla_q}-\subref{fig:stochastic_model_q}). In fact, the values learned by QSS are similar to the previous experiment when there was no stochasticity in the environment (Figure~\ref{fig:model_q}). As the transitions become more stochastic, the distance between values learned by QSA and QSS vastly increases (Figure~\ref{fig:euclidean_distance}). This provides evidence that the formulation of QSS assumes the best possible transition will occur, thus causing the values to be overestimated in stochastic settings. We include further experiments in the appendix that measure how stochastic transitions affect the average episodic return.

\subsection{QSS handles redundant actions}
One benefit of training QSS is that the transitions from one action can be used to learn values for another action. Consider the setting where two actions in a given state transition to the same next state. QSA would need to make updates for both actions in order to learn their values. But QSS only updates the transitions, thus ignoring any redundancy in the action space. We further investigate this property in a gridworld with redundant actions. Suppose an agent has four underlying actions, up, down, left, and right, but these actions are duplicated a number of times. As the number of redundant actions increases, the performance of QSA deteriorates, whereas QSS remains unaffected (Figure~\ref{fig:redundant_actions}\subref{fig:vanilla_actions}-\subref{fig:model_actions}). 

We also evaluate how QSS is impacted when the inverse dynamics model $I$ is learned rather than given (Figure~\ref{fig:redundant_actions}\subref{fig:id_actions}). We instantiate $I(s,s')$ as a set that is updated when an action $a$ is reached. We sample from this set anytime $I$ is called, and return a random sampling over all redundant actions if $I(s,s')=\emptyset$. Even in this setting, QSS is able to perform well because it only needs to learn about a single action that transitions from $s$ to $s'$.

\subsection{QSS enables value function transfer of permuted actions}
The final experiment in the tabular setting considers the scenario of transferring to an environment where the meaning of actions has changed. We imagine this could be useful in environments where the physics are similar but the actions have been labeled differently. In this case, QSS values should directly transfer, but not the inverse dynamics, which would need to be retrained from scratch. We trained QSA and QSS in an environment where the actions were labeled as 0, 1, 2, and 3, then transferred the learned values to an environment where the labels were shuffled. We found that QSS was able to learn much more quickly in the transferred environment than QSA (Figure~\ref{fig:redundant_actions}\subref{fig:transfer_actions}). Hence, we were able to retrain the inverse dynamics model more quickly than the values for QSA. Interestingly, QSA also learns quickly with the transferred values. This is likely because the Q-table is initialized to values that are closer to the true values than a uniformly initialized value. We include an additional experiment in the appendix where taking the incorrect action has a larger impact on the return. 

\section{Extending to the continuous domain with D3G}
\label{sec:d3g}
%
\begin{algorithm}[t]
\caption{D3G algorithm}
    \begin{algorithmic}[1]
        \STATE \textbf{Inputs:} Demonstrations or replay buffer $D$
        \STATE Randomly initialize $Q_{\theta_1}, Q_{\theta_2},  \tau_\psi,  I_\omega, f_\phi$
        \STATE Initialize target networks $\theta'_1 \leftarrow \theta_1, \theta'_2 \leftarrow \theta_2, \psi' \leftarrow \psi$
        \FOR{$t \in T$}
            \IF{imitation}
                \STATE Sample from demonstration buffer $s, r, s' \sim D$
            \ELSE
                \STATE Take action $a \sim I(s, \tau(s)) + \epsilon$
                \STATE Observe reward and next state
                \STATE Store experience in $D$
                \STATE Sample from replay buffer $s, a, r, s' \sim D$
            \ENDIF
            \STATE
            \STATE Compute $y = r + \gamma \min\limits_{i=1,2}  Q_{\theta'_i} (s', C(s', \tau_{\psi'}(s')))$
            \STATE // Update critic parameters:
            \STATE Minimize $\mathcal{L}_\theta=\sum_i \Vert y - Q_{\theta_i}(s,s') \Vert$
            \STATE
            \IF{$t$ mod $d$}
                \STATE // Update model parameters:
                \STATE Compute $s'_f = C(s, \tau_\psi(s))$
                \STATE Minimize $\mathcal{L}_\psi = -Q_{\theta_1}(s, s'_f) + \beta \Vert \tau_\psi(s) - s'_f)\Vert$
                \STATE // Update target networks:
                \STATE $\theta' \leftarrow \eta \theta + (1-\eta)\theta'$
                \STATE $\psi' \leftarrow \eta \psi + (1-\eta)\psi'$
            \ENDIF
            \STATE
            \IF{imitation}
                \STATE // Update forward dynamics parameters:
                \STATE Minimize $\mathcal{L}_\phi = \Vert f_\phi(s,Q_{\theta'_1}(s,s')) - s' \Vert$
            \ELSE
                \STATE // Update forward dynamics parameters:
                \STATE Minimize $\mathcal{L}_\phi = \Vert f_\phi(s,a) - s' \Vert$
                \STATE // Update inverse dynamics parameters:
                \STATE Minimize $\mathcal{L}_\omega = \Vert I_\omega(s,s') - a\Vert$
            \ENDIF
        \ENDFOR
        
    \end{algorithmic}
    \label{ref:alg_d3g}
\end{algorithm}
\begin{algorithm}[t]
\caption{Cycle}
    \begin{algorithmic}[1]
        \FUNCTION{C($s, s'_\tau$)}
            \IF{imitation}
                \STATE $q = Q_\theta(s, s'_\tau)$ 
                \STATE $s'_f = f_\phi(s, q) $
            \ELSE
                \STATE $a = I_\omega(s, s'_\tau)$ 
                \STATE $s'_f = f_\phi(s, a) $
            \ENDIF
        \ENDFUNCTION
    \end{algorithmic}
    \label{ref:alg_cycle}
\end{algorithm}
In contrast to domains where the state space is discrete and both QSA and QSS can represent relevant functions with a table, in continuous settings or environments with large state spaces we must approximate values with function approximation. One such approach is Deep Q-learning, which uses a deep neural network to approximate QSA~\cite{mnih-2013-arXiv-playing-atari-with,mnih2015human}. The loss is formulated as: $\mathcal{L}_\theta=\Vert y - Q_\theta(s,a) \Vert$, where $y = r + \gamma \max_{a'} Q_{\theta'}(s',a')$. 

Here, $\theta'$ is a target network that stabilizes training. Training is further improved by sampling experience from a replay buffer $s,a,r,s' \sim D$ to decorrelate the sequential data observed in an episode.
  
\subsection{Deep Deterministic Policy Gradients}
Deep Deterministic Policy Gradient (DDPG)~\cite{lillicrap2015continuous} applies Deep Q-learning to problems with continuous actions. Instead of computing a max over actions for the target $y$, it uses the output of a policy that is trained to maximize a critic $Q$: $y = r + \gamma Q_{\theta'}(s, \pi_{\psi'}(s))$. Here, $\pi_\psi(s)$ is known as an actor and trained using the following loss:
\begin{align*}
\mathcal{L}_\psi = -Q_\theta(s, \pi_\psi(s)).
\end{align*}
This approach uses a target network $\theta'$ that is moved slowly towards $\theta$ by updating the parameters as $\theta' \leftarrow \eta \theta + (1-\eta)\theta'$, where $\eta$ determines how smoothly the parameters are updated. A target policy network $\psi'$ is also used when training $Q$, and is updated similarly to $\theta'$. 

\subsection{Twin Delayed DDPG}
Twin Delayed DDPG (TD3) is a more stable variant of DDPG~\cite{fujimoto2018addressing}. One improvement is to delay the updates of the target networks and actor to be slower than the critic updates by a delay parameter $d$. Additionally, TD3 utilizes Double Q-learning~\cite{hasselt2010double} to reduce overestimation bias in the critic updates. Instead of training a single critic, this approach trains two and uses the one that minimizes the output of $y$:
\begin{align*}
    y = r + \gamma \min_{i=1,2} Q_{\theta'_i} (s', \pi_{\psi'}(s')).
\end{align*}
The loss for the critics becomes: 
\begin{align*}
\mathcal{L}_\theta = \sum_i \Vert y - Q_{\theta_i}(s,a) \Vert.
\end{align*}
Finally, Gaussian noise $\epsilon \sim \mathcal{N}(0,0.1)$ is added to the policy when sampling actions. We use each of these techniques in our own approach.
\subsection{Deep Deterministic Dynamics Gradients (D3G)}
A clear difficulty with training QSS in continuous settings is that it is not possible to iterate over an infinite state space to find a maximizing neighboring state. Instead, we propose training a model to directly output the state that maximizes QSS. We introduce an analogous approach to TD3 for training QSS, Deep Deterministic Dynamics Gradients (D3G). Like the deterministic policy gradient formulation $Q(s,\pi_\psi(s))$, D3G learns a model $\tau_\psi(s) \rightarrow s'$ that makes predictions that maximize $Q(s,\tau_\psi(s))$. To train the critic, we specify the loss as:
 \begin{align}
    \mathcal{L}_\theta=\sum_i \Vert y - Q_{\theta_i}(s,s') \Vert.
\end{align}
Here, the target $y$ is specified as:
\begin{align}
y = r + \gamma \min_{i=1,2} Q_{\theta_i'}(s', {\tau}_{\psi'}(s'))].
\end{align}

Similar to TD3, we utilize two critics to stabilize training and a target network for Q.

We train $\tau$ to maximize the expected return, $J$, starting from any state $s$:
\begin{align}
    \nabla_\psi J &= \mathbb{E}[\nabla_\psi Q(s, s')_{s' \sim \tau_\psi(s)}] \\
                  &= \mathbb{E}[\nabla_{s'} Q(s, s') \nabla_\psi \tau_\psi(s)] && \text{[using chain rule]} \nonumber
\end{align}
This can be accomplished by minimizing the following loss:
\begin{align*}
\mathcal{L}_\psi = -Q_\theta(s, \tau_\psi(s)).
\end{align*}
We discuss in the next section how this formulation alone may be problematic. We additionally use a target network for $\tau$, which is updated as $\psi' \leftarrow \eta \psi + (1-\eta)\psi$ for stability. As in the tabular case, $\tau_{\psi}(s)$ acts as a policy over states that aims to maximize $Q$, except now it is being trained to do so using gradient descent. To obtain the necessary action, we apply an inverse dynamics model $I$ as before:
\begin{equation}
    \pi(s) = I_\omega(s,\tau_\psi(s)).
\end{equation}
Now, $I$ is trained using a neural network with data $\langle s,a,s' \rangle \sim D$. The loss is:
\begin{equation}
    \mathcal{L}_\omega = \Vert I_\omega(s,s') - a\Vert.
\end{equation}

\subsubsection{Cycle consistency}
\begin{figure}[t]
    \centering
    \includegraphics[width=\linewidth]{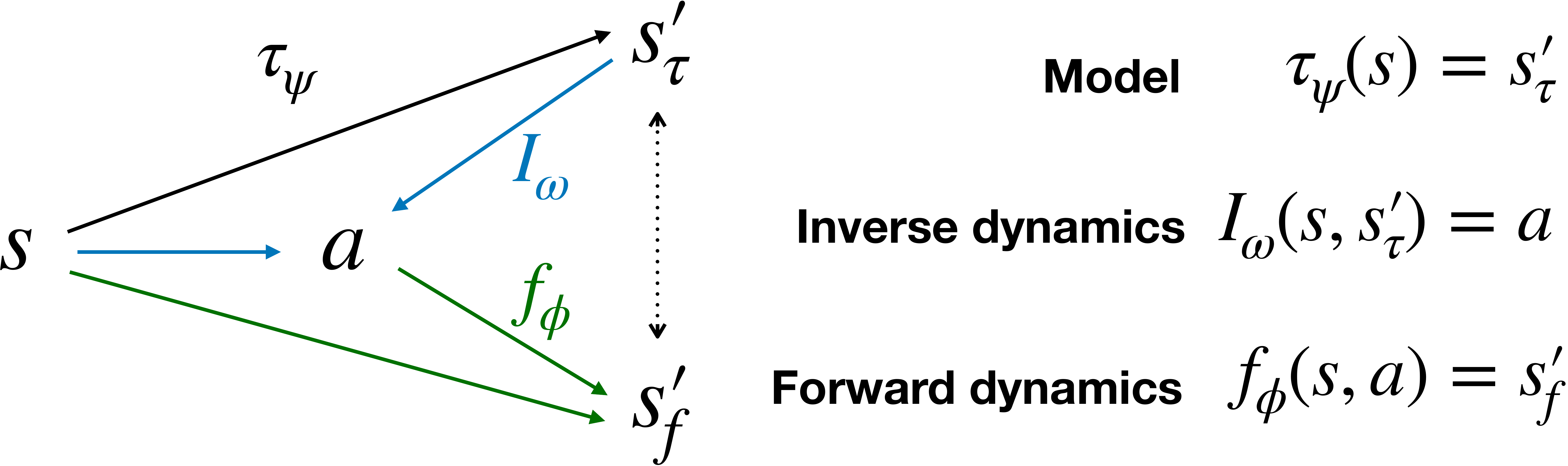}
    \caption{Illustration of the cycle consistency for training D3G. Given a state $s$, $\tau(s)$ predicts the next state $s'_\tau$ (black arrow). The inverse dynamics model $I(s, s'_\tau)$ predicts the action that would yield this transition (blue arrows). Then a forward dynamics model $f_\phi(s,a)$ takes the action and current state to obtain the next state, $s'_f$ (green arrows).
    }
    \label{fig:cycle}
\end{figure}
DDPG has been shown to overestimate the values of the critic, resulting in a policy that exploits this bias~\cite{fujimoto2018addressing}. Similarly, with the current formulation of the D3G loss, $\tau(s)$ can suggest non-neighboring states that the critic has overestimated the value for. To overcome this, we regularize  $\tau$ by ensuring the proposed states are reachable in a single step. In particular, we introduce an additional function for ensuring cycle consistency, $C(s, \tau_\psi(s))$ (see Algorithm~\ref{ref:alg_cycle}). We use this regularizer as a substitute for training interactions with $\tau$. As shown in Figure~\ref{fig:cycle}, given a state $s$, we use $\tau(s)$ to predict the value maximizing next state $s'_\tau$. We use the inverse dynamics model $I(s, s'_\tau)$ to determine the action $a$ that would yield this transition. We then plug that action into a forward dynamics model $f(s,a)$ to obtain the final next state, $s'_f$. In other words, we regularize $\tau$ to make predictions that are consistent with the inverse and forward dynamics models.

To train the forward dynamics model, we compute:
\begin{equation}
    \mathcal{L}_\phi = \Vert f_\phi(s,a) - s' \Vert.
\end{equation}

We can then compute the cycle loss for $\tau_\psi$:
\begin{align}
    \mathcal{L}_\psi = -Q_\theta(s, C(s, \tau_\psi(s)) + \beta \Vert \tau_\psi(s) - C(s, \tau_\psi(s)) \Vert.
\end{align}
The second regularization term further encourages prediction of neighbors. The final target for training Q becomes:
\begin{align}
y = r + \gamma \min_{i=1,2} Q_{\theta_i'}(s', C(s', \tau_{\psi'}(s')))
\end{align}
We train each of these models concurrently. The full training procedure is described in Algorithm~\ref{ref:alg_d3g}.

\subsubsection{A note on training dynamics models}
We found it useful to train the models $\tau_\psi$ and $f_\phi$ to predict the difference between states $\Delta = s' - s$ rather than the next state, as has been done in several other works~\cite{nagabandi2018neural, goyal2018recall,edwards2018forward}. As such, we compute $s'_\tau = s + \tau(s)$ to obtain the next state from $\tau(s)$, and  $s'_f = s + f(s,a)$ to obtain the next state prediction for $f(s,a)$. We describe this implementation detail here for clarity of the paper.
\section{D3G properties and results}
\begin{figure}[tb]
    \centering
    \includegraphics[width=.34\linewidth]{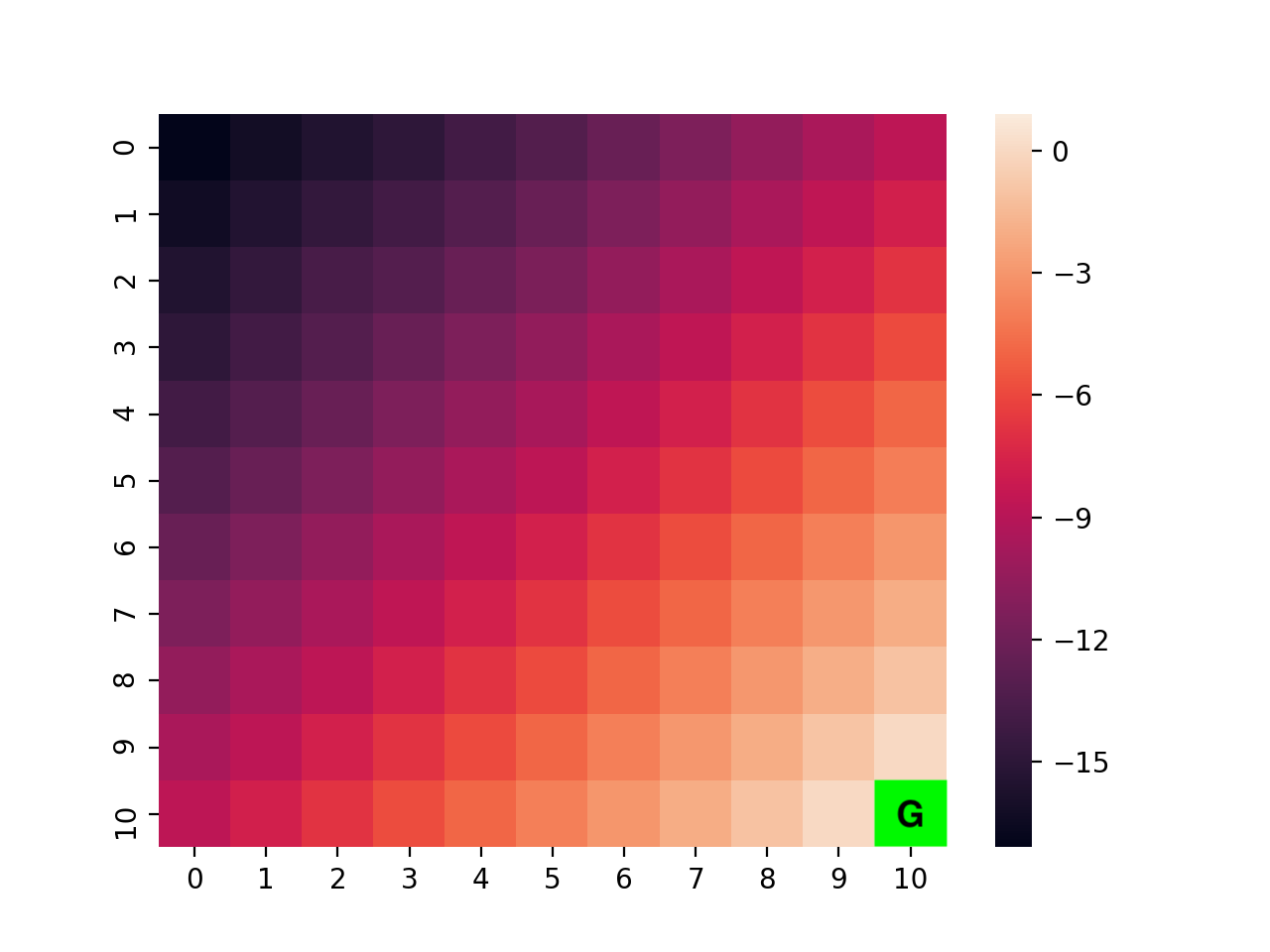}
    \includegraphics[width=.29\linewidth]{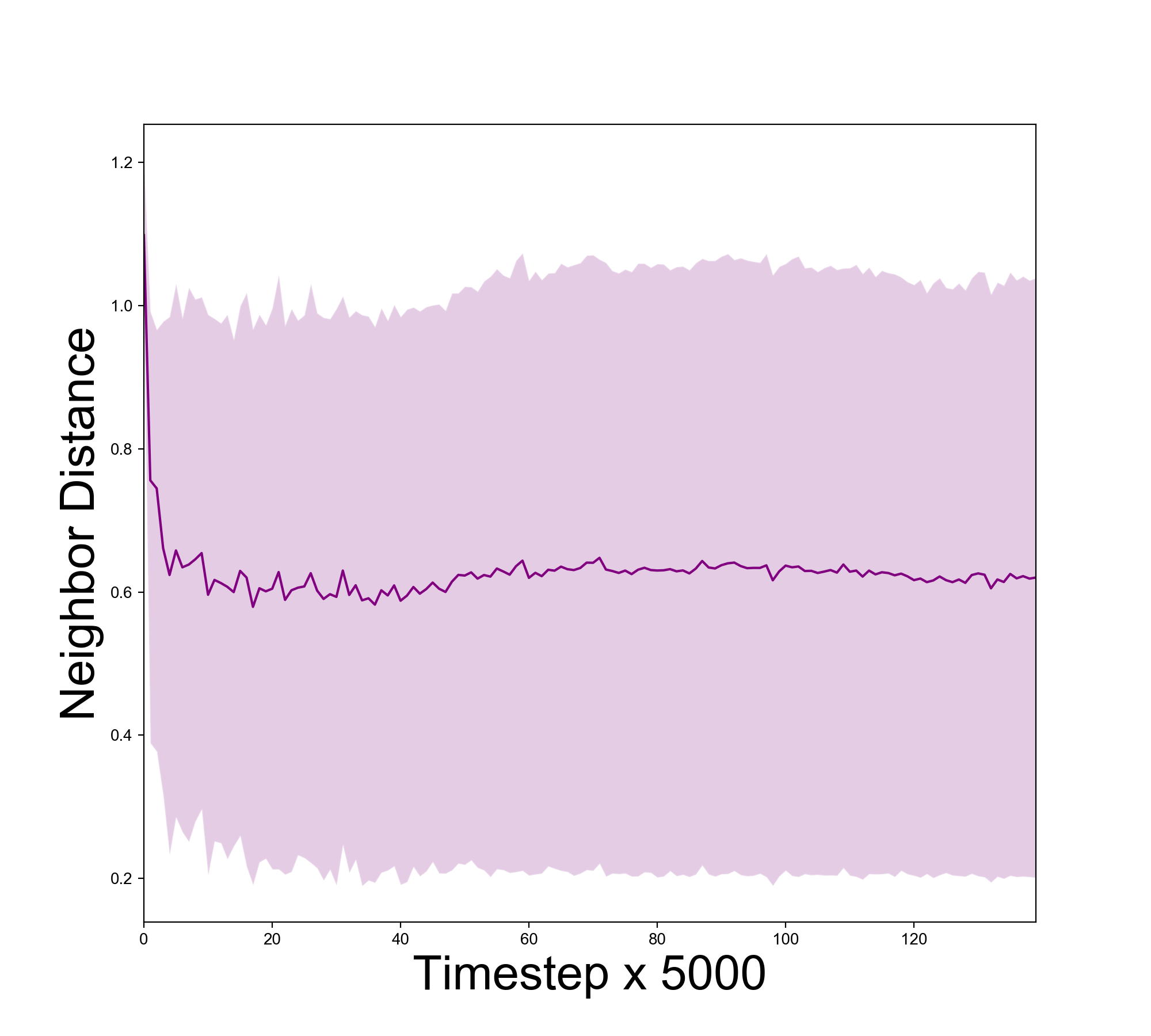}
    \includegraphics[width=.34\linewidth]{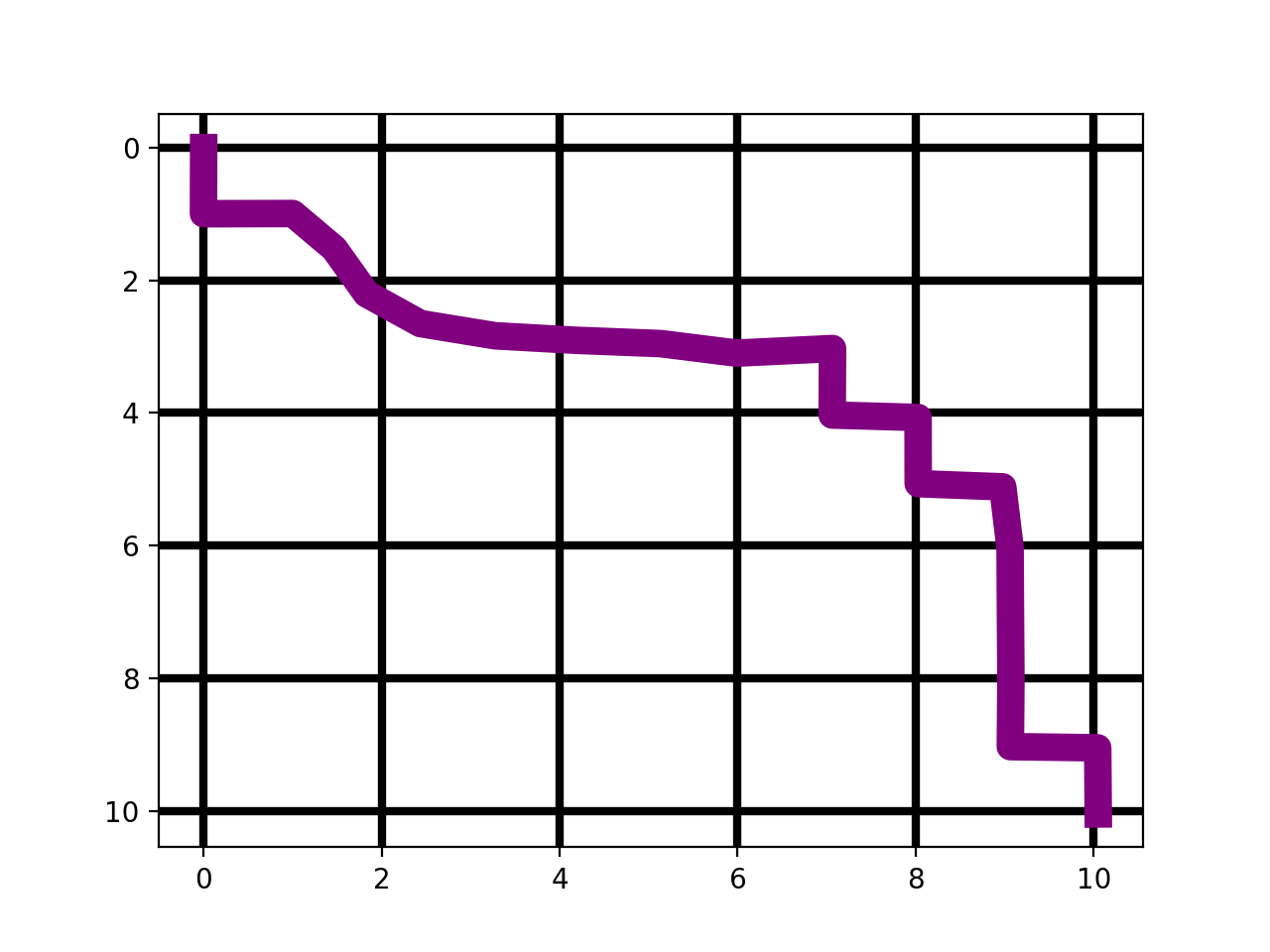}
    
    \includegraphics[width=.34\linewidth]{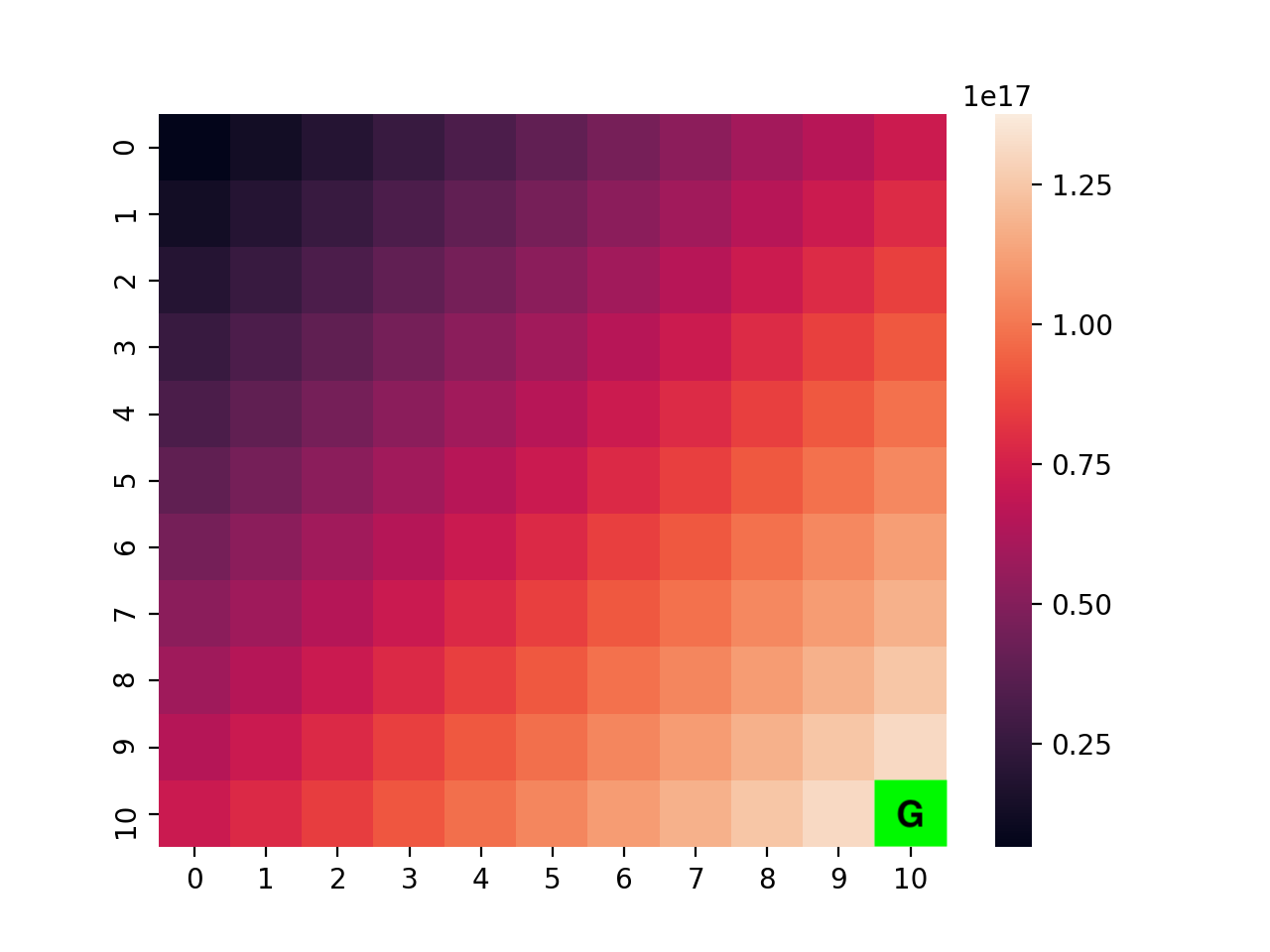}
    \includegraphics[width=.29\linewidth]{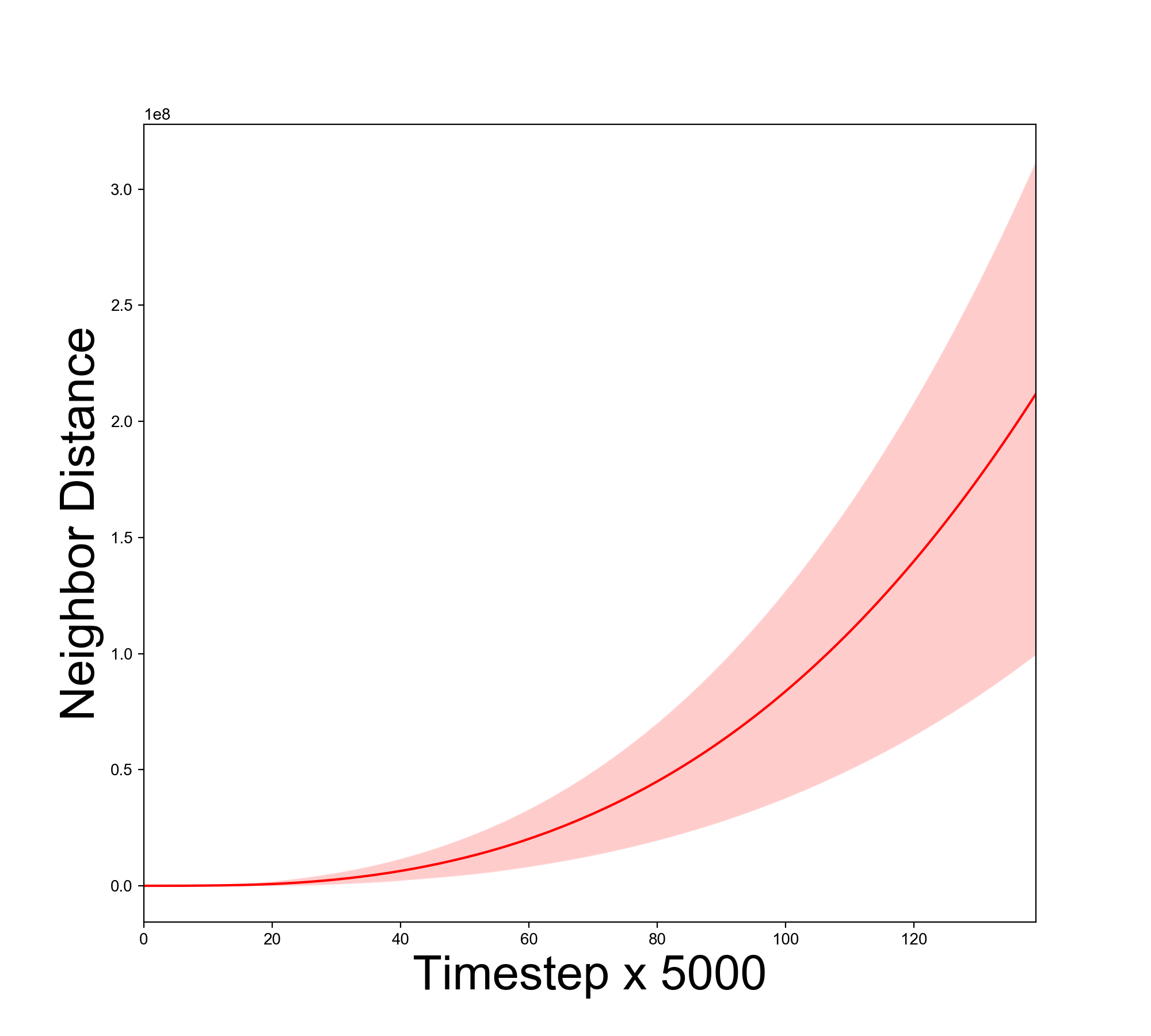}
    \includegraphics[width=.34\linewidth]{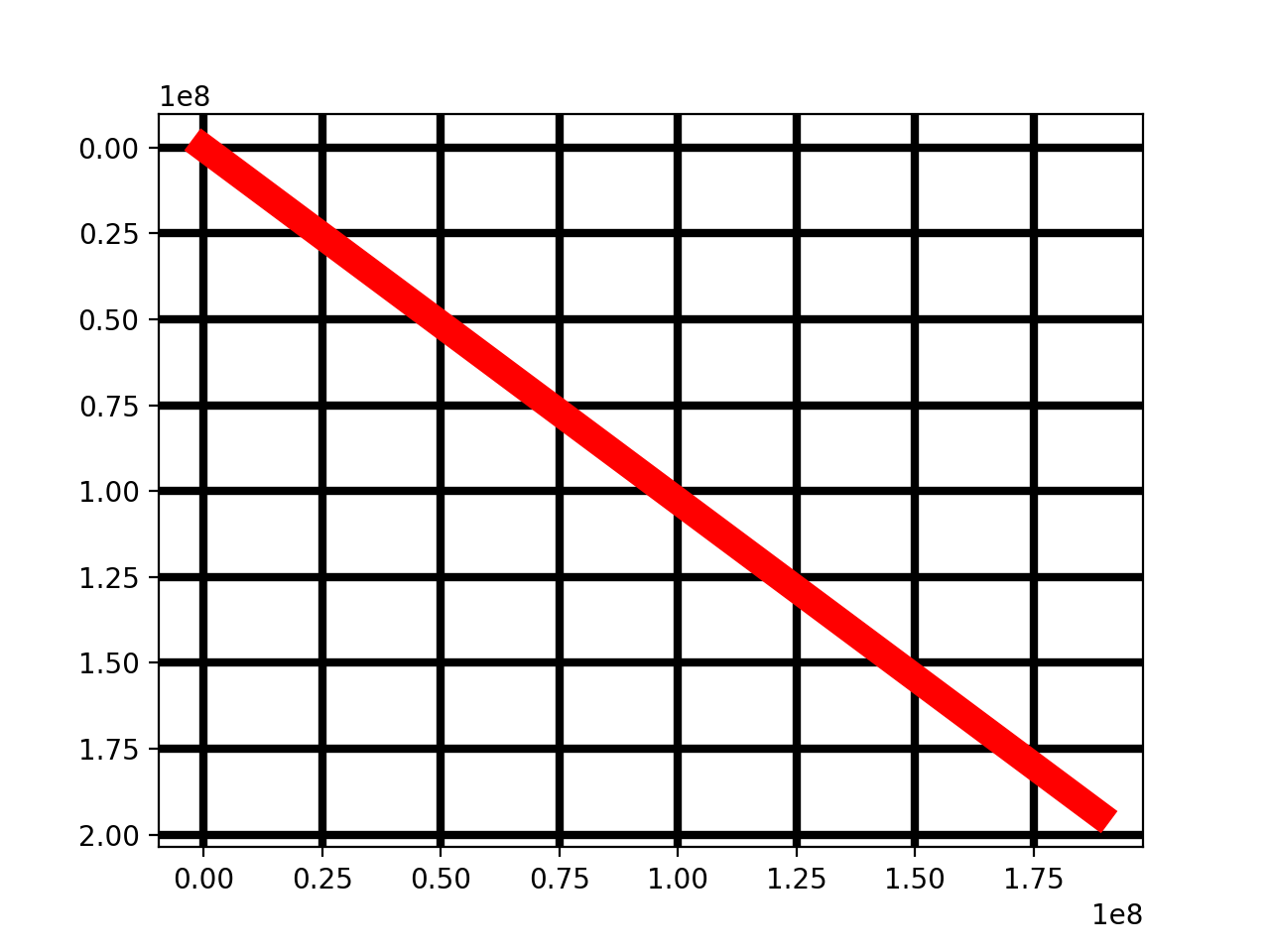}
    \caption{Gridworld experiments for D3G (top) and D3G\textsuperscript{--} (bottom). The left column represents the value function $Q(s,\tau(s))$. The middle column represents the average nearest neighbor predicted by $\tau$ when $s$ was initialized to $\langle 0, 0 \rangle$. These results were averaged over 5 seeds with 95\% confidence intervals. The final column displays the trajectory predicted by $\tau(s)$ when starting from the top left corner of the grid.}
    \label{fig:d3g_grid_experiments}
\end{figure}
 \begin{figure*}[htb]
  \centering
 \includegraphics[width=.245\linewidth]{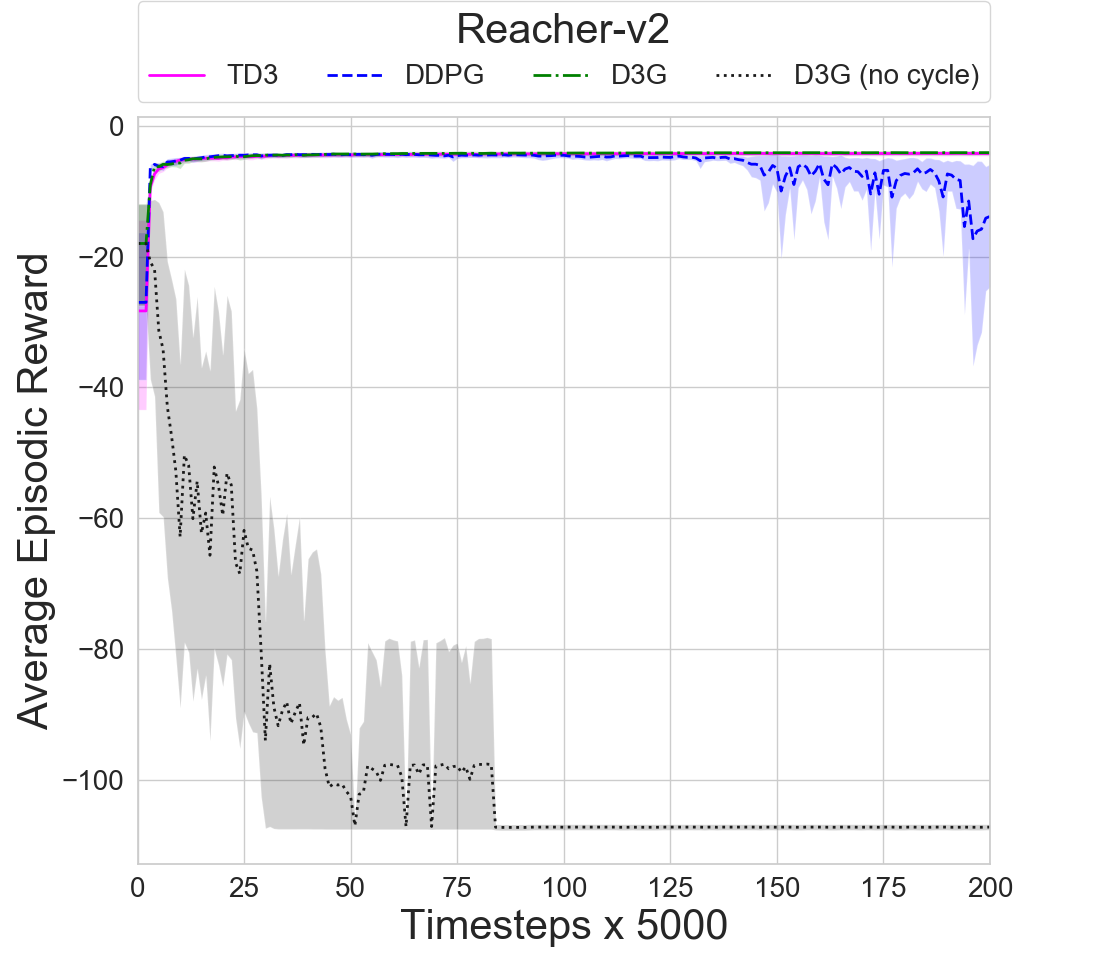}
 \includegraphics[width=.245\linewidth]{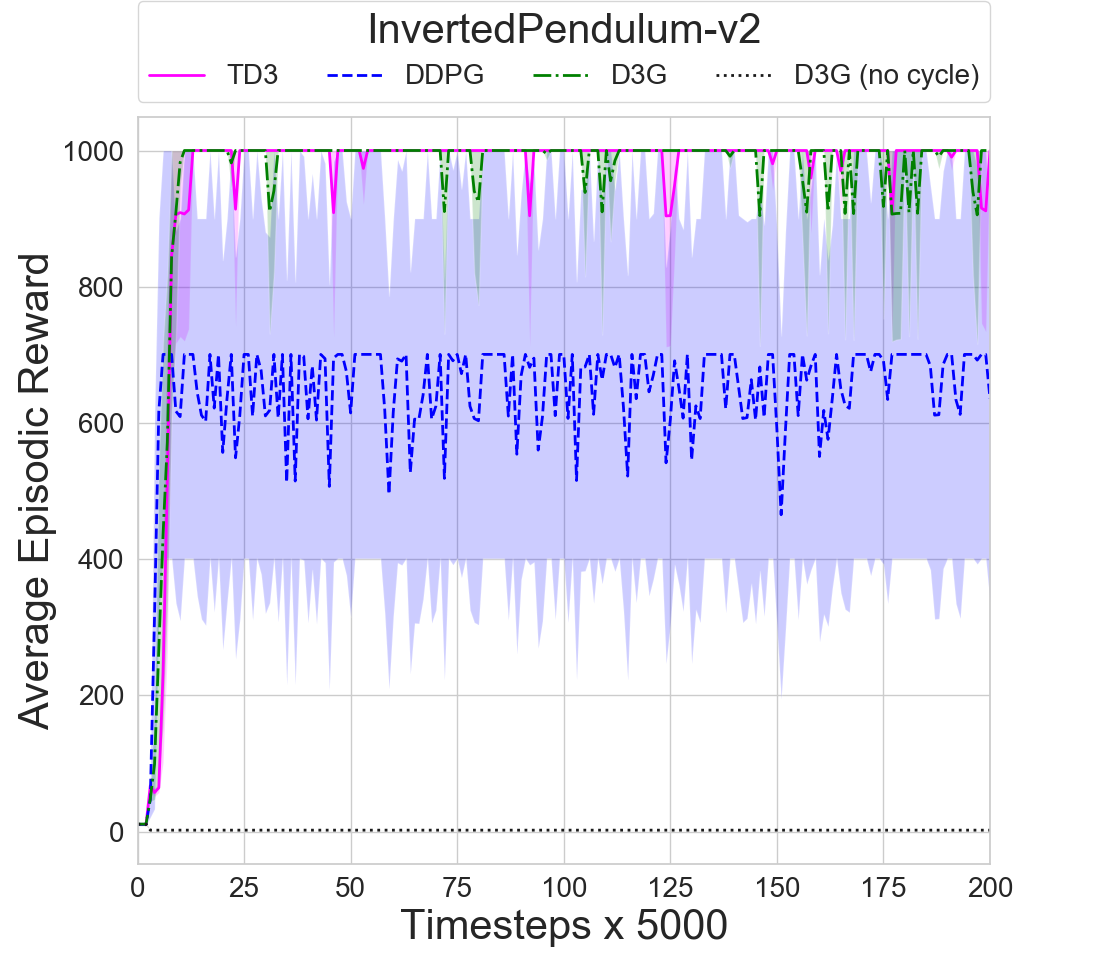}
 \includegraphics[width=.245\linewidth]{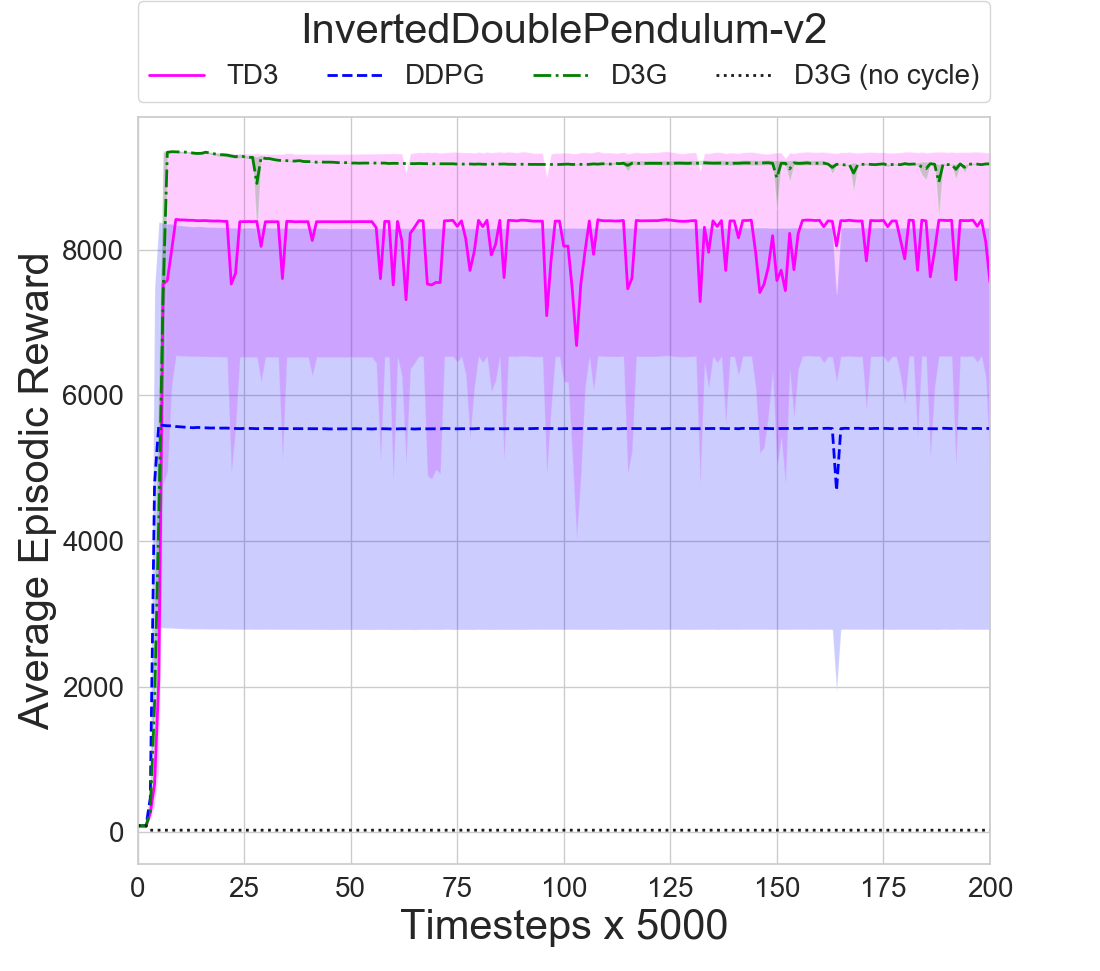}
 \includegraphics[width=.245\linewidth]{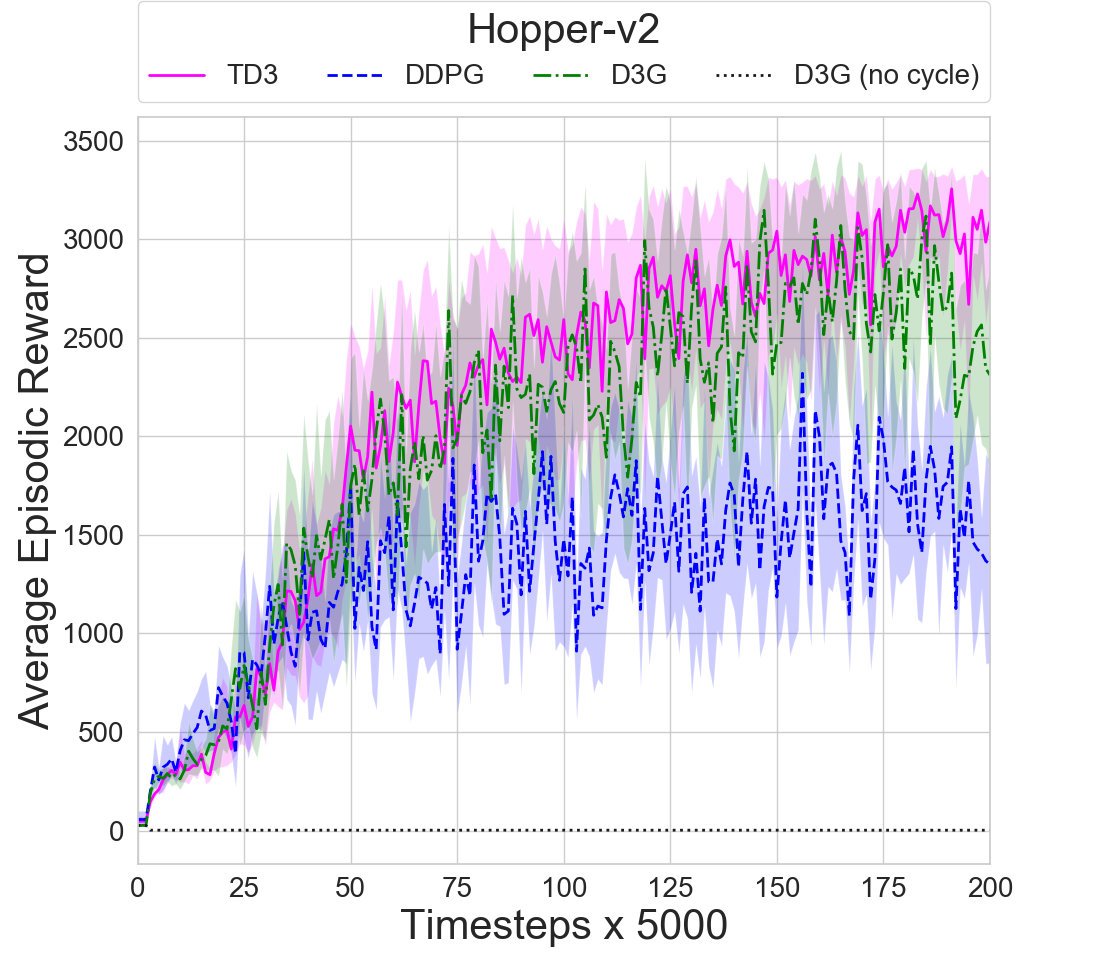}
 
 \includegraphics[width=.245\linewidth]{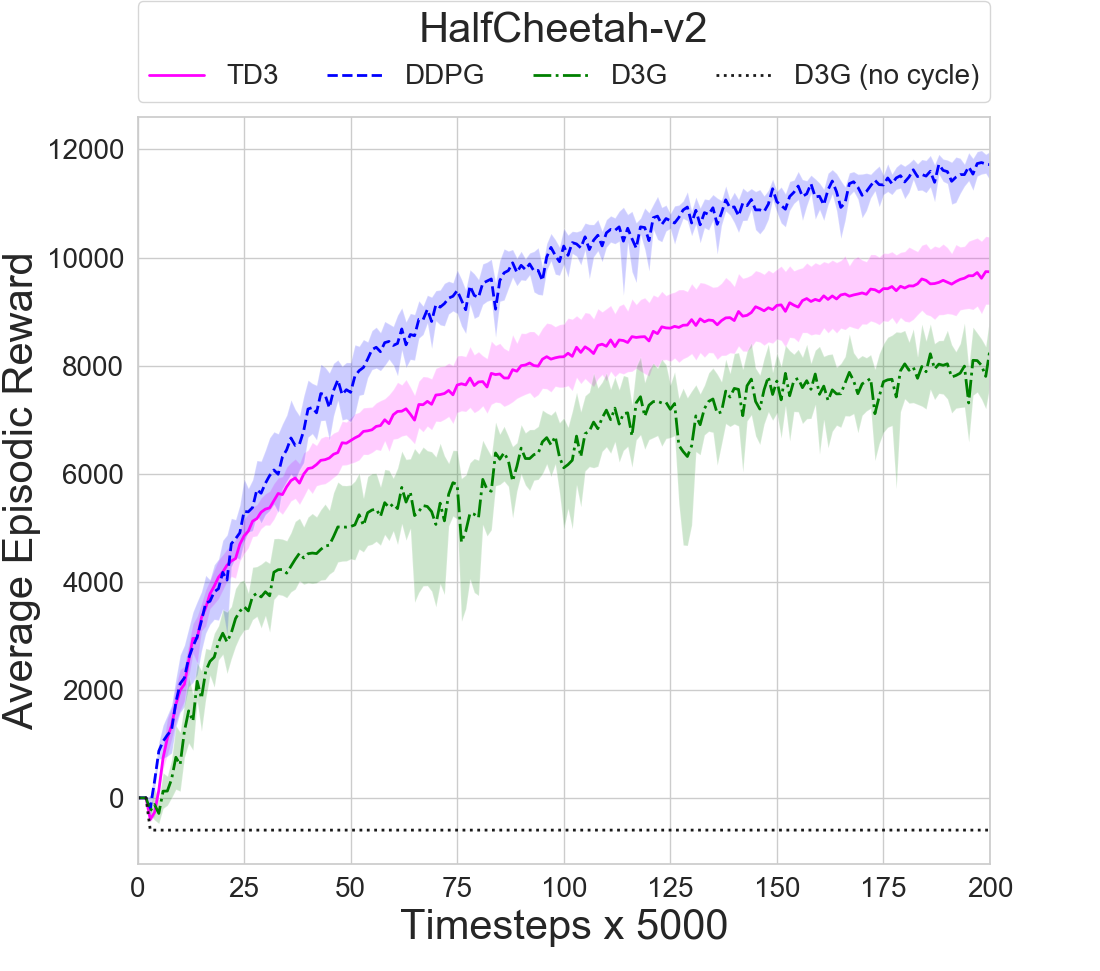}
 \includegraphics[width=.245\linewidth]{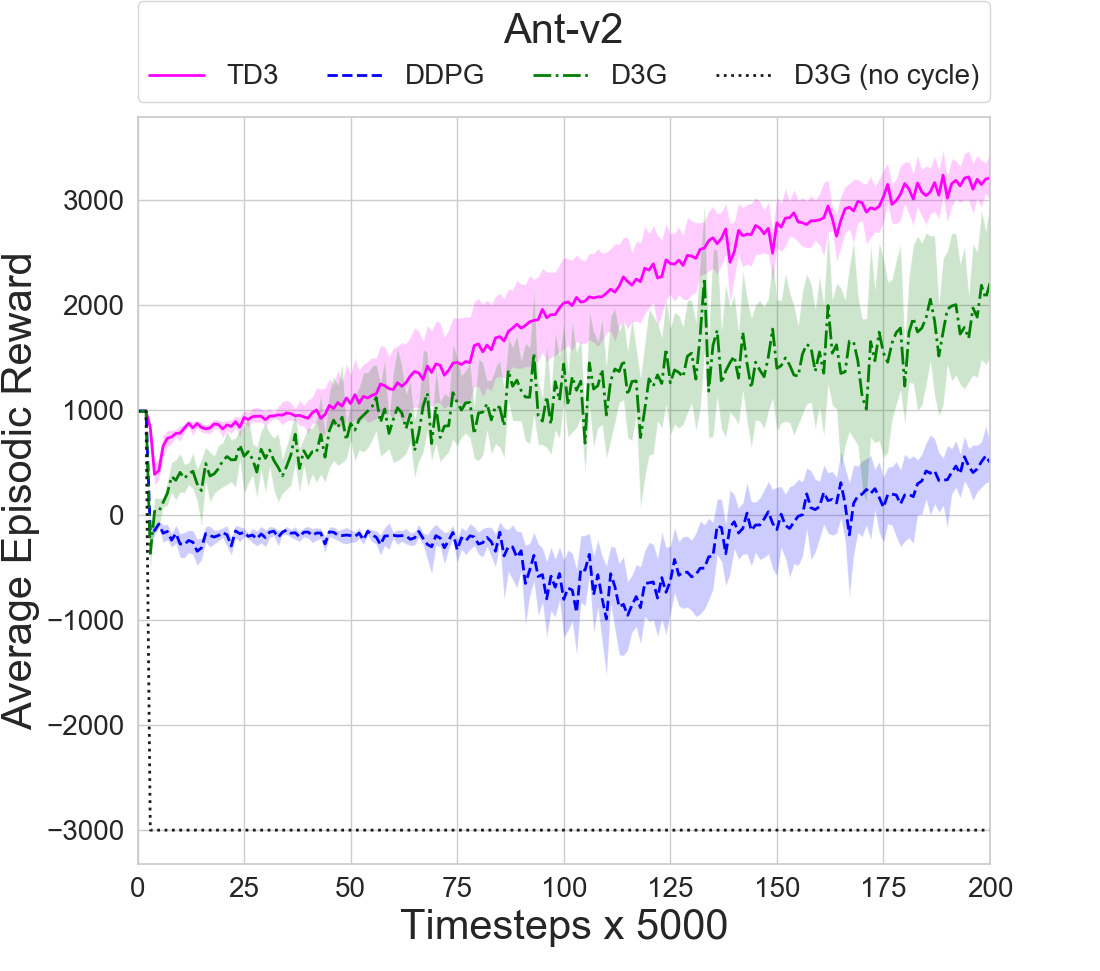}
 \includegraphics[width=.245\linewidth]{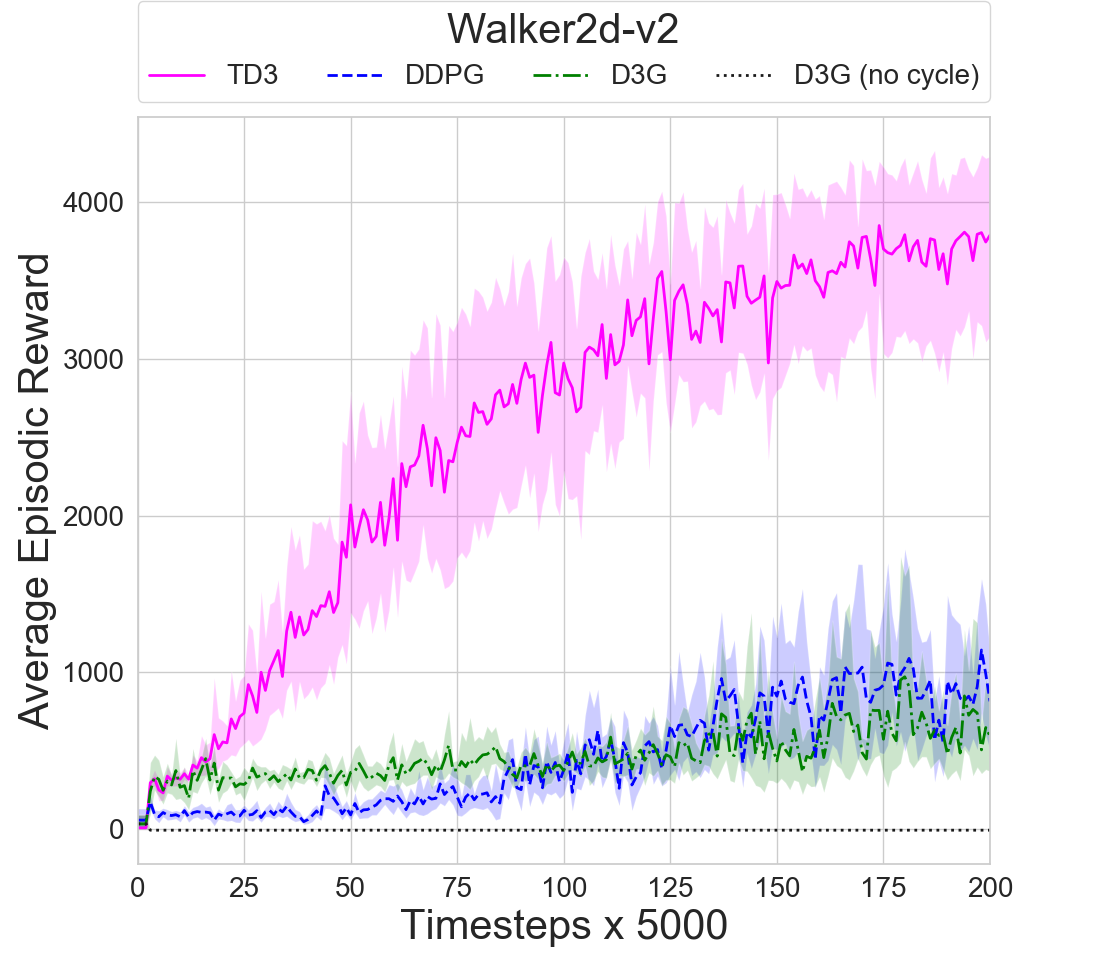}
 \includegraphics[width=.245\linewidth]{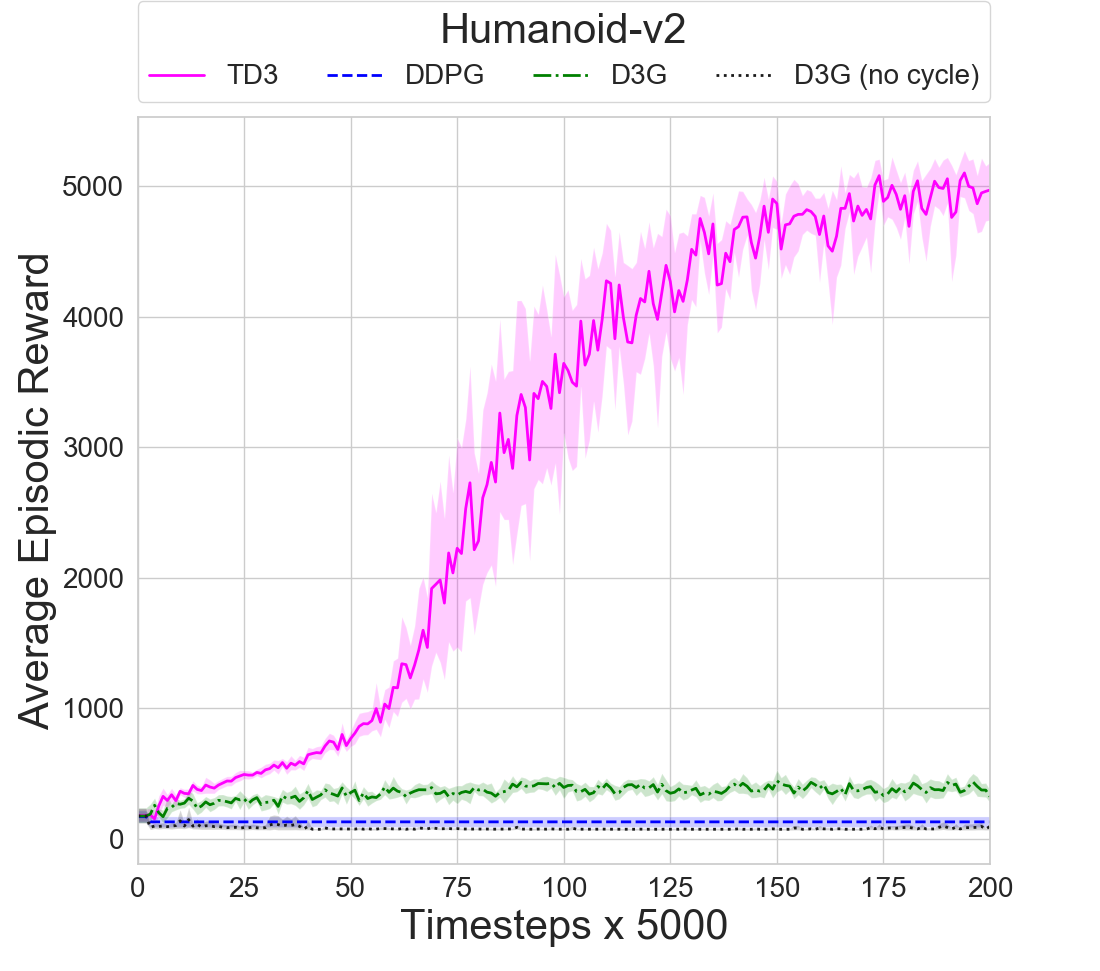}
\caption{Experiments for training TD3, DDPG, D3G\textsuperscript{--}  and D3G in MuJoCo tasks. Every $5000$ timesteps, we evaluated the learned policy and averaged the return over 10 trials. The experiments were averaged over 10 seeds with 95\% confidence intervals.}
\label{fig:mujoco_experiments}
\end{figure*}
\begin{figure*}[t]
    \centering

    \includegraphics[width=.106\linewidth]{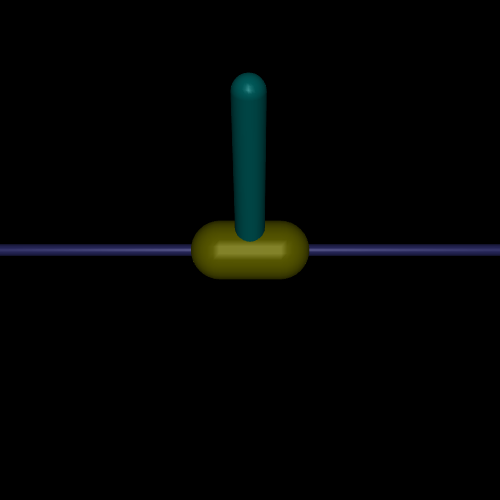}
    \includegraphics[width=.106\linewidth]{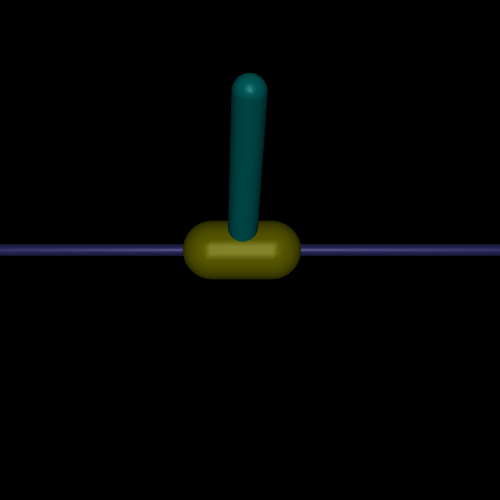}
    \includegraphics[width=.106\linewidth]{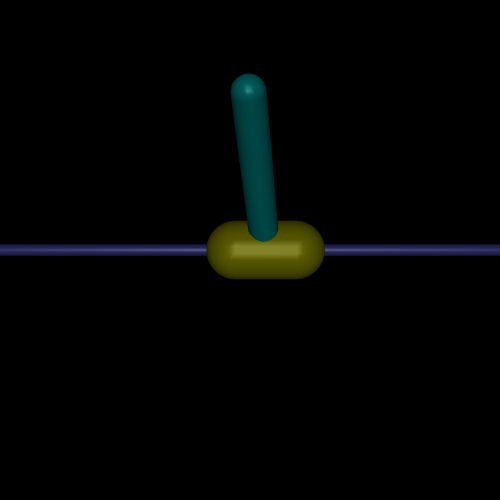}
    \includegraphics[width=.106\linewidth]{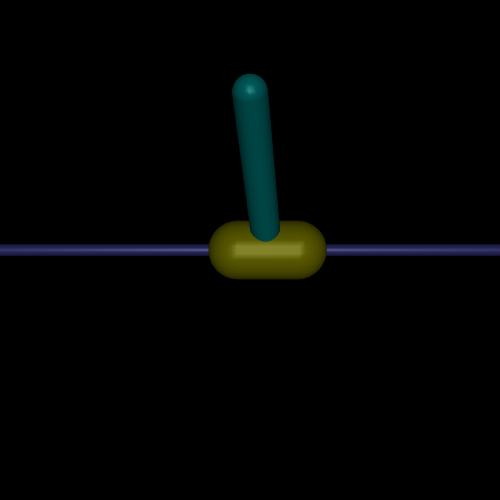}
    \includegraphics[width=.106\linewidth]{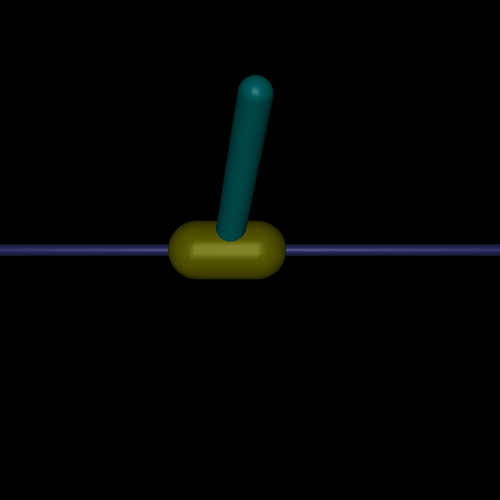}
    \includegraphics[width=.106\linewidth]{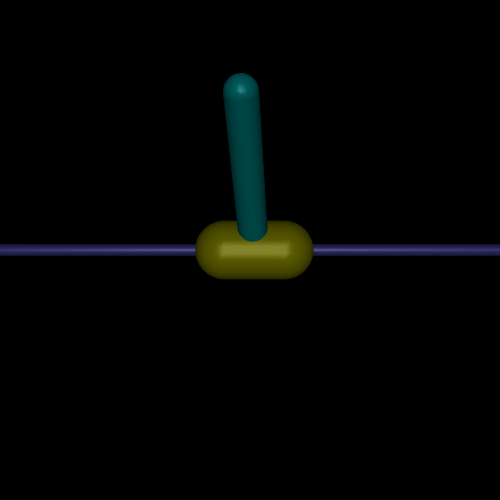}
    \includegraphics[width=.106\linewidth]{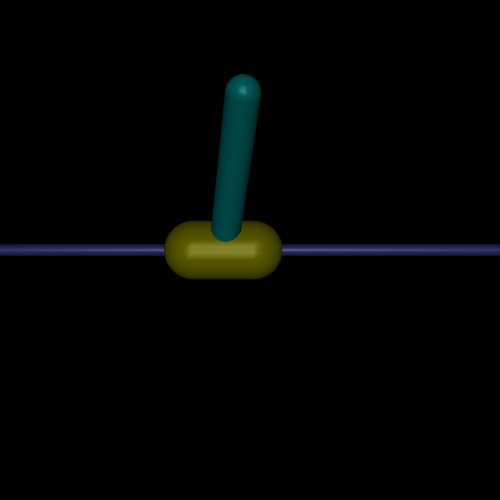}
    \includegraphics[width=.106\linewidth]{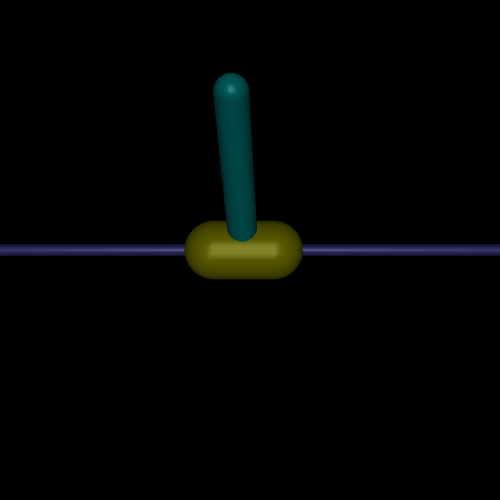}
    \includegraphics[width=.106\linewidth]{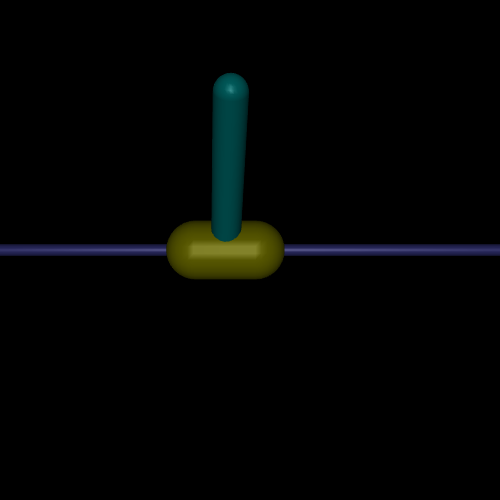}

    \includegraphics[width=.106\linewidth]{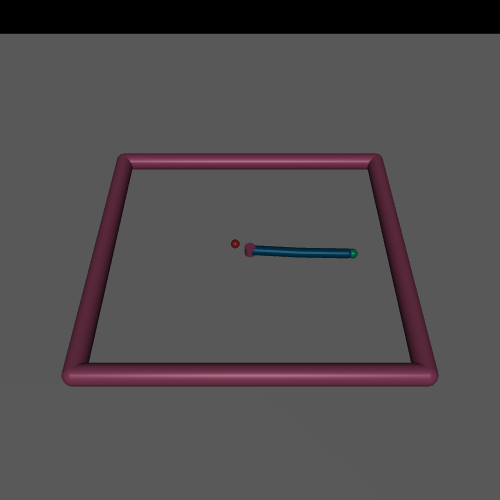}
    \includegraphics[width=.106\linewidth]{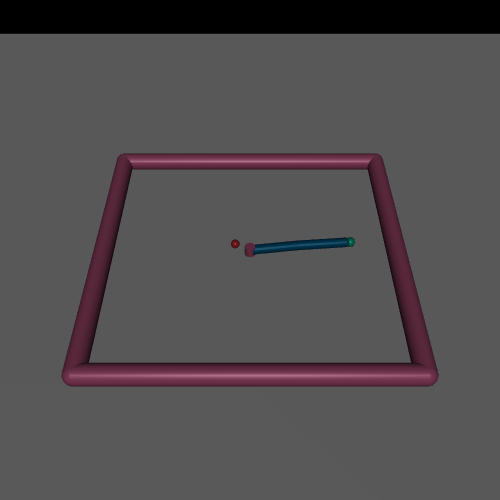}
    \includegraphics[width=.106\linewidth]{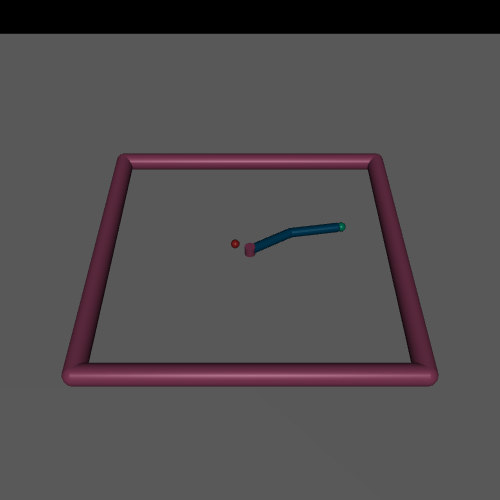}
    \includegraphics[width=.106\linewidth]{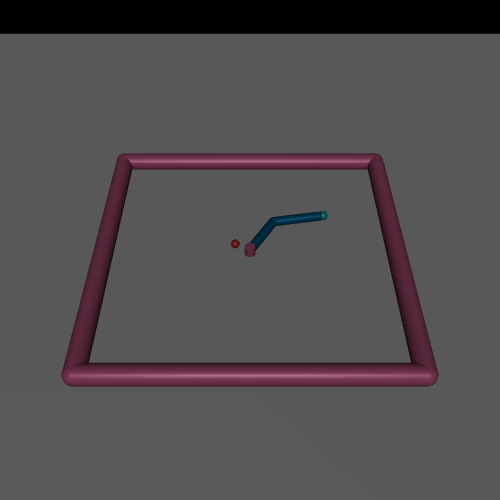}
    \includegraphics[width=.106\linewidth]{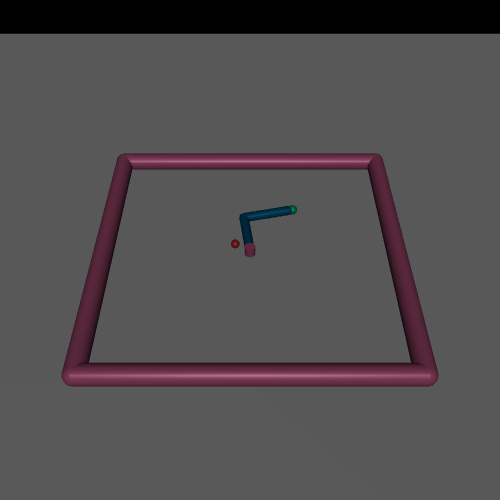}
    \includegraphics[width=.106\linewidth]{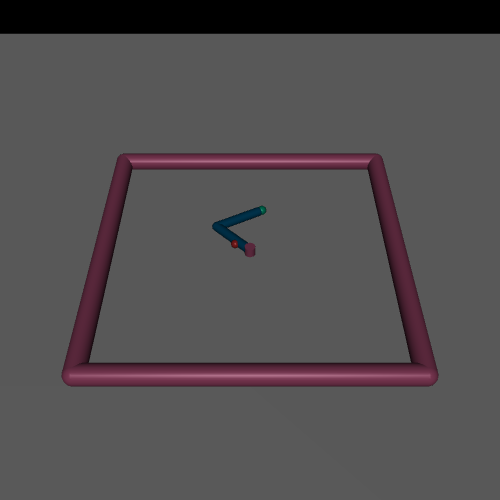}
    \includegraphics[width=.106\linewidth]{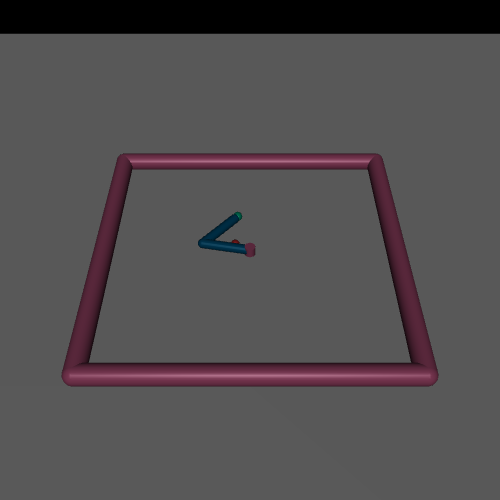}
    \includegraphics[width=.106\linewidth]{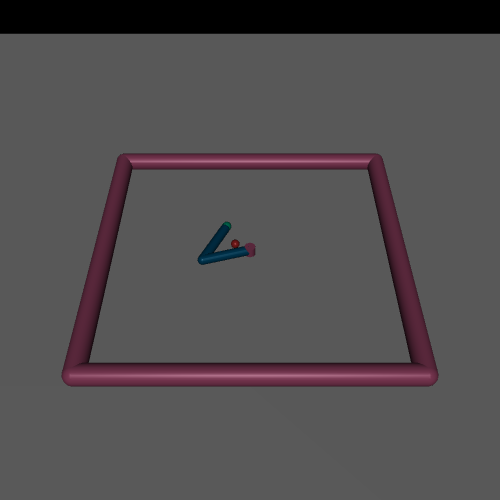}
    \includegraphics[width=.106\linewidth]{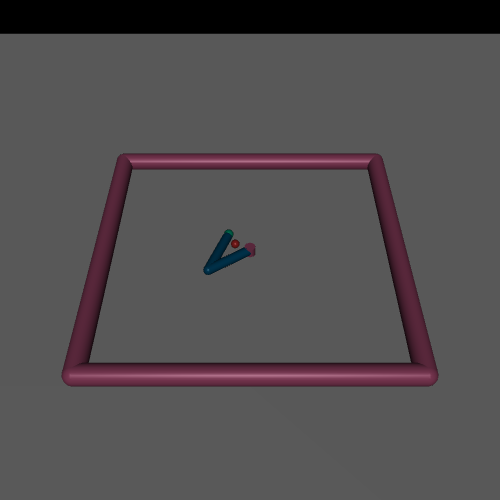}
  
    \caption{D3G generated plans learned from observational data obtained from a completely random policy in InvertedPendulum-v2 (top) and Reacher-v2 (bottom). To generate the plans, we first plugged the initial state from the column in the left into $C(s,\tau(s))$ to predict the next state $s'_f$. We then plugged this state into $C(s'_f, \tau(s'_f))$ to hallucinate the next state. We visualize the model predictions after every $5$ steps. In the Reacher-v2 environment, we set the target (ball position) to be constant and the delta between the fingertip position and target position to be determined by the joint positions (fully described by the first four elements of the state) and the target position. This was only for visualization purposes and was not done during training. Videos are available at \url{http://sites.google.com/view/qss-paper}. }
    \label{fig:mujoco_plans}
\end{figure*}
We now describe several experiments that aimed to measure different properties of D3G. We include full training details of hyperparameters and architectures in the appendix. 

\subsection{Example of D3G in a gridworld}
We first evaluate D3G within a simple 11x11 gridworld with discrete states and actions (Figure~\ref{fig:d3g_grid_experiments}). The agent can move a single step in one of the cardinal directions, and obtains a reward of -1 until it reaches the goal. Because D3G uses an inverse dynamics model to determine actions, it is straightforward to apply it to this discrete setting. 

These experiments examine if D3G learns meaningful values, predicts neighboring states, and makes realistic transitions toward the goal. We additionally investigate the merits of using a cycle loss. 

We first visualize the values learned by D3G and D3G without cycle loss (D3G\textsuperscript{--}). The output of QSS increases for both methods as the agent moves closer to the goal (Figure~\ref{fig:d3g_grid_experiments}). This indicates that D3G can be used to learn meaningful QSS values. However, D3G\textsuperscript{--} vastly overestimates these values\footnote{One seed out of five in the D3G\textsuperscript{--} experiments did yield a good value function, but we did not witness this problem of overestimation in D3G.}. Hence, it is clear that the cycle loss helps to reduce overestimation bias. 

Next, we evaluate if $\tau(s)$ learns to predict neighboring states. First, we set the agent state to $\langle 0, 0 \rangle$. We then compute the minimum Manhattan distance of $\tau(\langle 0, 0 \rangle)$ to the neighbors of $N(\langle 0, 0 \rangle$). This experiment examines how close the predictions made by $\tau(s)$ are to neighboring states. 

In this task, D3G is able to predict states that are no more than one step away from the nearest neighbor on average (Figure~\ref{fig:heatmap}). However, D3G\textsuperscript{--} makes predictions that are significantly outside of the range of the grid. We see this further when
visualizing a trajectory of state predictions made by $\tau$. D3G\textsuperscript{--} simply makes predictions along the diagonal until it extends beyond the grid range. However, QSS learns to predict grid-like steps to the goal, as is required by the environment. This suggests that the cycle loss ensures predictions made by $\tau(s)$ are neighbors of $s$. 

\subsection{D3G can be used to solve control tasks}
\begin{table*}[t]
  \caption{Learning from observation results. We evaluated the learned policies every $1000$ steps for $100000$ total steps. We averaged $10$ trials in each evaluation and computed the maximum average score. We then average of the maximum average scores for $10$ seeds.}
    \label{tab:reacher_bco}
\centering
\begin{tabular}{cc}
    \\
\begin{tabular}{|c|ccc|}
 \cline{2-4}
      \multicolumn{1}{c}{}  & \multicolumn{3}{|c|}{Reacher-v2}\\
\hline
\% Random  & $\pi_o$ & BCO  & D3G  \\  \hline
0  & -$4.1 \pm 0.7$  & \textbf{-}$\boldsymbol{4.2 \pm 0.6}$ & -$14.7 \pm 30.5$        \\ 
25  & -$12.5 \pm 1.0$    & -$4.3 \pm 0.6$ & \textbf{-}$\boldsymbol{4.2 \pm 0.6}$          \\ 
50 & -$22.6 \pm 0.9$    & -$4.9 \pm 0.7$  & \textbf{-}$\boldsymbol{4.2 \pm 0.6}$        \\ 
75  & -$32.6 \pm 0.4$   & -$6.6 \pm 1.3$ & \textbf{-}$\boldsymbol{4.6 \pm 0.6}$          \\ 
100  & -$40.6 \pm 0.5$     & -$9.7 \pm 0.8$   & \textbf{-}$\boldsymbol{6.4 \pm 0.7}$     \\ \hline
\end{tabular}

\begin{tabular}{|ccc|}
  \cline{1-3}
     \multicolumn{3}{|c|}{InvertedPendulum-v2}\\
 \hline
$\pi_o$ & BCO  & D3G  \\ \hline
$1000 \pm 0$   & $\boldsymbol{1000 \pm 0}$ & $3.0 \pm 0.9$         \\ 
$52.3 \pm 3.7$   &  $\boldsymbol{1000 \pm 0}$ & $602.1 \pm 487.4$           \\ 
$18.0 \pm 2.4$    & $12.1 \pm 8.3$  & \textbf{$\boldsymbol{900.2 \pm 299.2}$}           \\ 
$11.4 \pm 1.3$   & $12.1 \pm 8.3$ & \textbf{$\boldsymbol{1000 \pm 0}$}          \\ 
$8.6 \pm 0.3$     & $31.0 \pm 4.7$   & \textbf{$\boldsymbol{1000 \pm 0}$}          \\ \hline
\end{tabular}
\end{tabular}
\end{table*}
We next evaluate D3G in more complicated MuJoCo tasks from OpenAI Gym~\cite{brockman2016openai}. These experiments examine if D3G can be used to learn complex control tasks, and the impact of the cycle loss on training. We compare against TD3 and DDPG.

In several tasks, D3G is able to perform as well as TD3 and significantly outperforms DDPG (Figure~\ref{fig:mujoco_experiments}). Without the cycle loss, D3G\textsuperscript{--} is not able to accomplish any of the tasks. D3G does perform poorly in Humanoid-v2 and Walker2d-v2. Interestingly, DDPG also performs poorly in these tasks. Nevertheless, we have demonstrated that D3G can indeed be used to solve difficult control tasks. This introduces a new research direction for actor-critic, enabling training a dynamics model, rather than policy, whose predictions optimize the return. We demonstrate in the next section that this model is powerful enough to learn from observations obtained from completely random policies. 

\subsection{D3G enables learning from observations obtained from random policies}
Imitation from observation is a technique for training agents to imitate in settings where actions are not available. Traditionally, approaches have assumed that the observational data was obtained from an expert, and train models to match the distribution of the underlying policy~\cite{torabi2018behavioral, edwards2019imitating}. Because $Q(s,s')$ does not include actions, we can use it to~\textit{learn} from observations, rather than imitate, in an off-policy manner. This allows learning from observation data from completely random policies.

To learn from observations, we assume we are given a dataset of state observations, rewards, and termination conditions obtained by some policy $\pi_o$. We train D3G to learn QSS values and a model $\tau(s)$ offline without interacting with the environment. One problem is that we cannot use the cycle loss described in Section~\ref{sec:d3g}, as it relies on knowing the executed actions. Instead, we need another function that allows us to cycle from $\tau(s)$ to a predicted next state. 

To do this, we make a novel observation.~\emph{The forward dynamics model $f$ does not need to take in actions to predict the next state}. It simply needs an input that can be used as a clue for predicting the next state. We propose using $Q(s,s')$ as a replacement for the action. Namely, we now train the forward dynamics model with the following loss:
\begin{equation}
    \mathcal{L}_\phi = \Vert f_\phi(s,Q_{\theta'}(s,s')) - s' \Vert.
\end{equation}
Because Q is changing, we use the target network $Q_{\theta'}$ when learning $f$. We can then use the same losses as before for training QSS and $\tau$, except we utilize the cycle function defined for imitation in Algorithm~\ref{ref:alg_cycle}.

We argue that $Q$ is a good replacement for $a$ because for a given state, different QSS values often indicate different neighboring states. While this may not always be useful (there can of course be multiple optimal states), we found that this worked well in practice.

To evaluate this hypothesis, we trained QSS in InvertedPendulum-v2 and Reacher-v2 with data obtained from expert policies with varying degrees of randomness. We first visualize predictions made by $C(s, \tau(s))$ when trained from a completely random policy (Figure~\ref{fig:mujoco_plans}). Because $\tau(s)$ aims to make predictions that maximize QSS, it is able to hallucinate plans that solve the underlying task. In InvertedPendulum-v2, $\tau(s)$ makes predictions that balance the pole, and in Reacher-v2, the arm moves directly to the goal location. As such, we have demonstrated that $\tau(s)$ can be trained from observations obtained from random policies to produce optimal plans. 

Once we learn this model, we can use it to determine how to act in an environment. To do this, given a state $s$, we use $\tau(s) \rightarrow s'_\tau$ to propose the best next state to reach. In order to determine what action to take, we train an inverse dynamics model $I(s,s'_\tau)$ from a few steps taken in the environment, and use it to predict the action $a$ that the agent should take. We compare this to Behavioral Cloning from Observation (BCO)~\cite{torabi2018behavioral}, which aims to learn policies that mimic the data collected from $\pi_o$. 

As the data collected from $\pi_o$ becomes more random, D3G significantly outperforms BCO, and is able to achieve high reward when the demonstrations were collected from completely random policies (Table~\ref{tab:reacher_bco}). This suggests that D3G is indeed capable of off-policy learning. Interestingly, D3G performs poorly when the data has 0\% randomness. This is likely because off-policy learning requires that every state has some probability of being visited. 

\section{Related work}
We now discuss several works related to QSS and D3G.

\textbf{Hierarchical reinforcement learning}
The concept of generating states is reminiscent of hierarchical RL~\cite{barto2003recent}, in which the policy is implemented as a hierarchy of sub-policies. In particular, approaches related to feudal RL~\cite{dayan1993feudal} rely on a manager policy providing goals (possibly indirectly, through sub-manager policies) to a worker policy. These goals generally map to actual environment states, either through a learned state representation as in FeUdal Networks~\cite{vezhnevets2017feudal}
, an engineered representation as in h-DQN~\cite{kulkarni2016hierarchical}, or simply by using the same format as raw environment states as in HIRO~\cite{nachum2018data}. One could think of the $\tau(s)$ function in QSS as operating like a manager by suggesting a target state, and of the $I(s, s')$ function as operating like a worker by providing an action that reaches that state. Unlike with hierarchical RL, however, both operate at the same time scale. 

\textbf{Goal generation}
This work is also related to goal generation approaches in RL, where a \emph{goal} is a set of desired states, and a policy is learned to act optimally toward reaching the goal. For example, Universal Value Function Approximators~\cite{schaul2015universal} consider the problem of conditioning action-values with goals that, in the simplest formulation, are fixed by the environment. Recent advances in automatic curriculum building for RL reflects the importance of self-generated goals, where the intermediate goals of curricula towards a final objective are automatically generated by approaches such as automatic goal generation \cite{florensa2018automatic}, intrinsically motivated goal exploration processes~\cite{forestier2017intrinsically}, and reverse curriculum generation \cite{pmlr-v78-florensa17a}.

\citet{nair2018visual} employ goal-conditioned value functions along with Variational autoencoders (VAEs) to generate goals for self-supervised practice and for dense reward relabeling in hindsight. Similarly, IRIS \cite{mandlekar2019iris} trains conditional VAEs for goal prediction and action prediction for robot control. \citet{NIPS2019_8818} use a GAN to hallucinate visual goals and combine it with hindsight experience replay \cite{NIPS2017_7090} to increase sample efficiency.
Unlike these approaches, in D3G goals are always a single step away, generated by maximizing the the value of the neighboring state.

\textbf{Learning from observation}
Imitation from Observation (IfO) allows imitation learning without access to actions~\cite{sermanet2017time,liu2017imitation,torabi2018behavioral,edwards2019imitating,torabi2019recent,sun2019provably}. Imitating when the action space differs between the agent and expert is a similar problem, and typically requires learning a correspondence~\cite{kim2019cross,liu2019state}. IfO approaches often aim to match the performance of the expert. D3G aims to~\emph{learn}, rather than imitate. T-REX~\cite{brown2019extrapolating} is a recent IfO approach that can perform better than the demonstrator, but requires a ranking over demonstrations. Finally, like D3G, Deep Q-learning from Demonstrations learns off-policy from demonstration data, but requires demonstrator actions~\cite{hester2018deep}. 

Several works have considered predicting next states from observations, such as videos, which can be useful for planning or video prediction~\cite{finn2017deep,kurutach2018learning, rybkin2018learning,schmeckpeper2019learning}. In our work, the model $\tau$ is trained automatically to make predictions that maximize the return. 

\textbf{Action reduction}
QSS naturally combines actions that have the same effects. Recent works have aimed to express the similarities between actions to learn policies more quickly, especially over large action spaces. For example, one approach is to learn action embeddings, which could then be used to learn a policy~\cite{chandak2019learning, chen2019learning}. Another approach is to directly learn about irrelevant actions and then eliminate them from being selected~\cite{ Zahavy2018LearnWN}. That work is evaluated in the text-based game Zork. Text-based environments would be an interesting direction to explore as several commands may lead to the same next state or have no impact at all. QSS would naturally learn to combine such transitions.

\textbf{Successor Representations}
The successor representation \citep{dayan1993improving} describes a state as the sum of expected occupancy of future states under the current policy. It allows for decoupling of the environment's dynamics from immediate rewards when computing expected returns and can be conveniently learned using TD methods. \citet{barreto2017successor} extend this concept to successor \textit{features}, $\psi^\pi(s,a)$. Successor features are the expected value of the discounted sum of $d$-dimensional \textit{features} of transitions, $\phi(s, a, s')$, under the policy $\pi$. In both cases, the decoupling of successor state occupancy or features from a representation of the reward allows easy transfer across tasks where the dynamics remains the same but the reward function can change. Once successor features are learned, they can be used to quickly learn action values for all such tasks. Similarly, QSS is able to transfer or share values when the underlying dynamics are the same but the action label has changed.

\section{Conclusion}
In this paper, we introduced QSS, a novel form of value function that expresses the utility of transitioning to a state and acting optimal thereafter. To train QSS, we developed Deep Deterministic Dynamics Gradients, which we used to train a model to make predictions that maximized QSS. We showed that the formulation of QSS learns similar values as QSA, naturally learns well in environments with redundant actions, and can transfer across shuffled actions. We additionally demonstrated that D3G can be used to learn complicated control tasks, can generate meaningful plans from data obtained from completely random observational data, and can train agents to act from such data.

\section*{Acknowledgements}
The authors thank Michael Littman for comments on related literature and further suggestions for the paper. We would also like to acknowledge Joost Huizinga, Felipe Petroski Such, and other members of Uber AI Labs for meaningful discussions about this work. Finally, we thank the anonymous reviewers for their helpful comments.

\bibliography{icml}
\bibliographystyle{icml2020}

\clearpage
\appendix
\begin{appendices}
\section{QSS Experiments}
We ran all experiments in an 11x11 gridworld. The state was the agent's $\langle x,y \rangle$ location on the grid. The agent was initialized to $\langle 0, 0 \rangle$ and received a reward of $-1$ until it reached the goal at $\langle 10, 10 \rangle$ and obtained a reward of $1$ and was reset to the initial position. The episode automatically reset after $500$ steps. 

We used the same hyperparameters for QSA and QSS. We initialized the Q-values to $.001$. The learning rate $\alpha$ was set to $.01$ and the discount factor was set to $.99$. The agent followed an $\epsilon$-greedy policy. Epsilon was set to $1$ and decayed to $.1$ by subtracting 9e-6 every time step. 

\subsection{Additional stochastic experiments}
\begin{figure}[!htb]
    \centering
      \begin{subfigure}{.325\linewidth}
  	\centering
    \includegraphics[width=\linewidth]{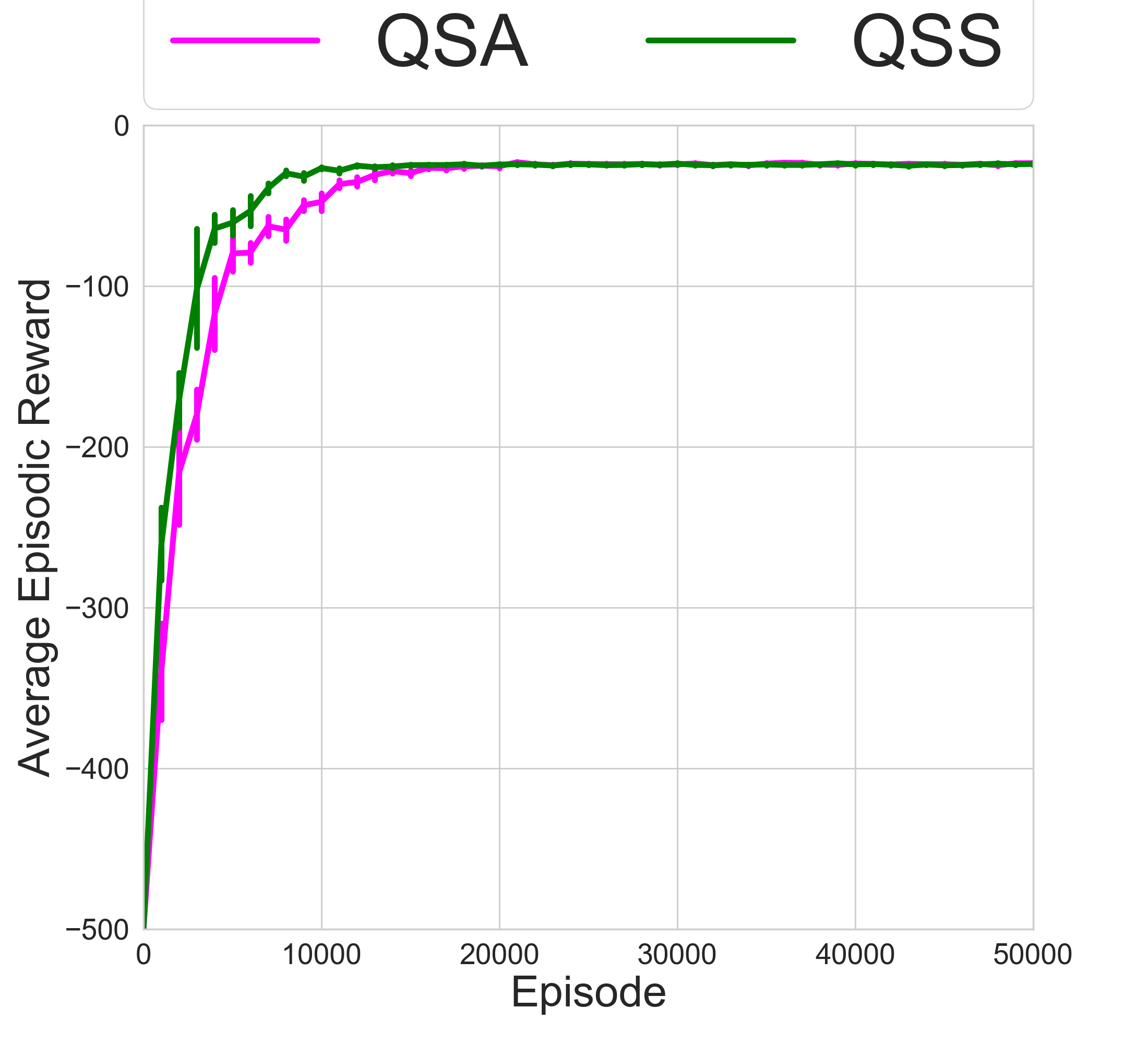}
    \caption{25\% }
  \end{subfigure}
    \begin{subfigure}{.325\linewidth}
  	\centering
    \includegraphics[width=\linewidth]{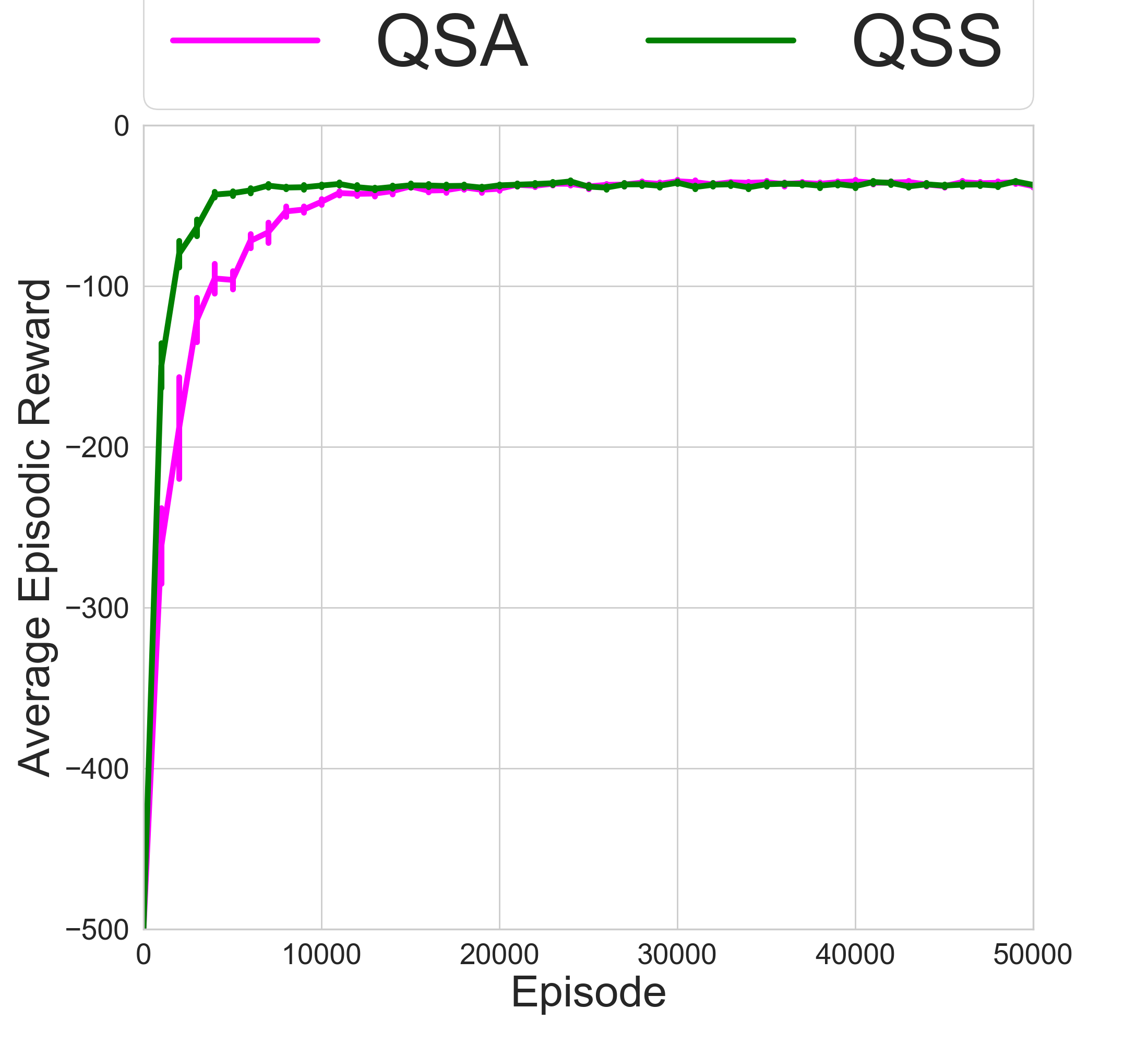}
    \caption{50\% }
  \end{subfigure}
  \centering
      \begin{subfigure}{.325\linewidth}
  	\centering
    \includegraphics[width=\linewidth]{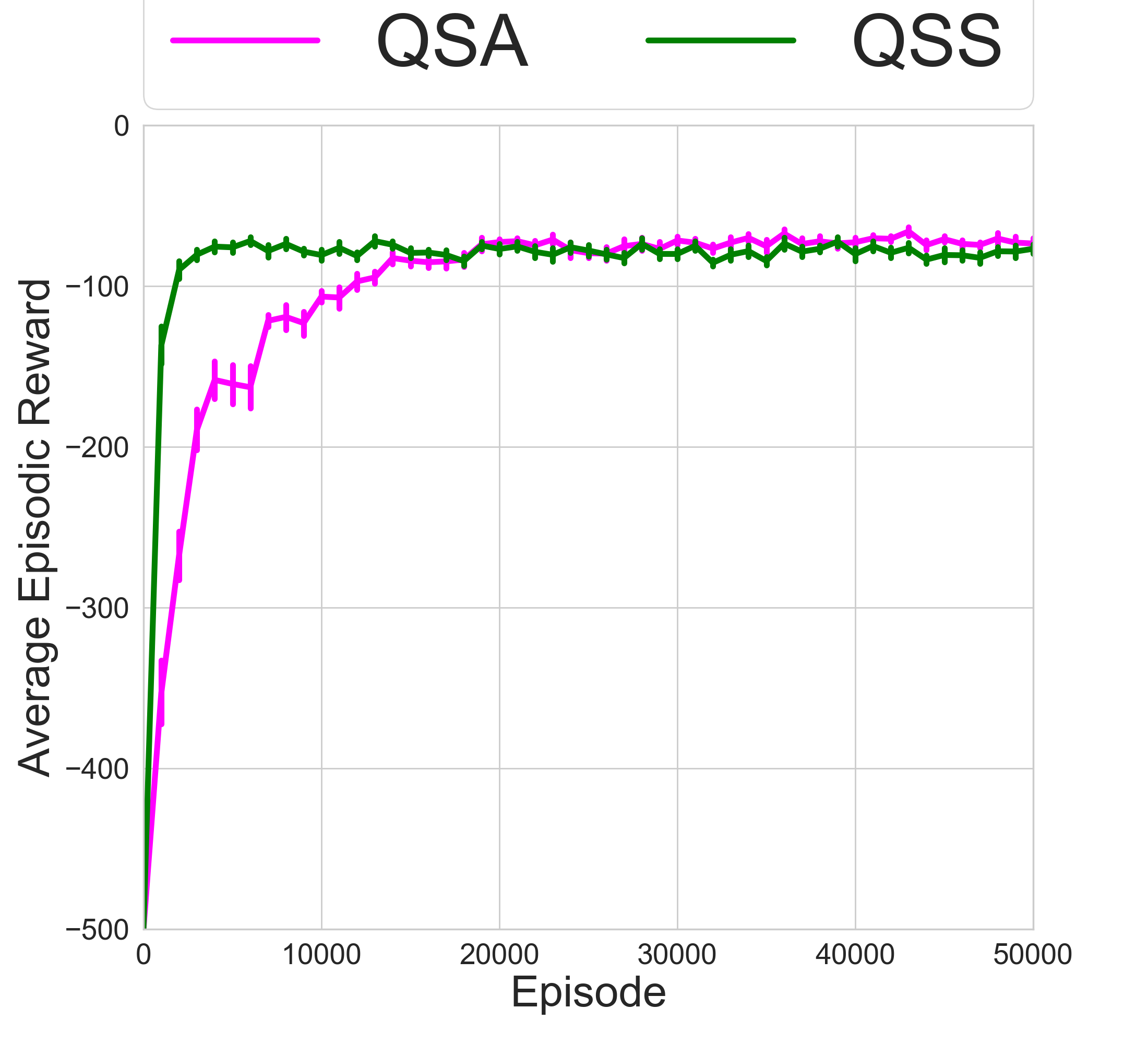}
    \caption{75\%}
  \end{subfigure}
    \caption{Stochastic experiments in an 11x11 gridworld. The first three experiments demonstrate the effect of stochastic actions on the average return. Before each episode, we evaluated the learned policy and averaged the return over 10 trials. All experiments were averaged over 10 seeds with 95\% confidence intervals.}
     \label{fig:stochastic_actions}
\end{figure}
\begin{figure}[!htb]
    \centering
  \begin{subfigure}{.49\linewidth}
  	\centering
    \includegraphics[width=.9\linewidth]{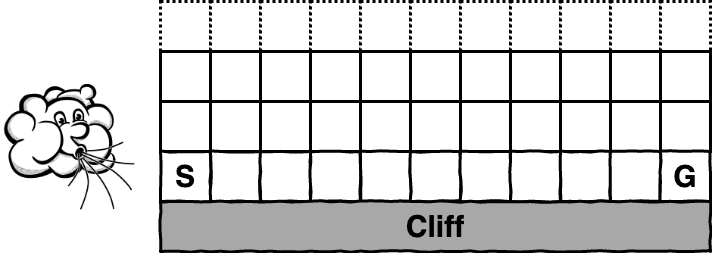}
  \end{subfigure}
  \begin{subfigure}{.49\linewidth}
  	\centering
    \includegraphics[width=.8\linewidth]{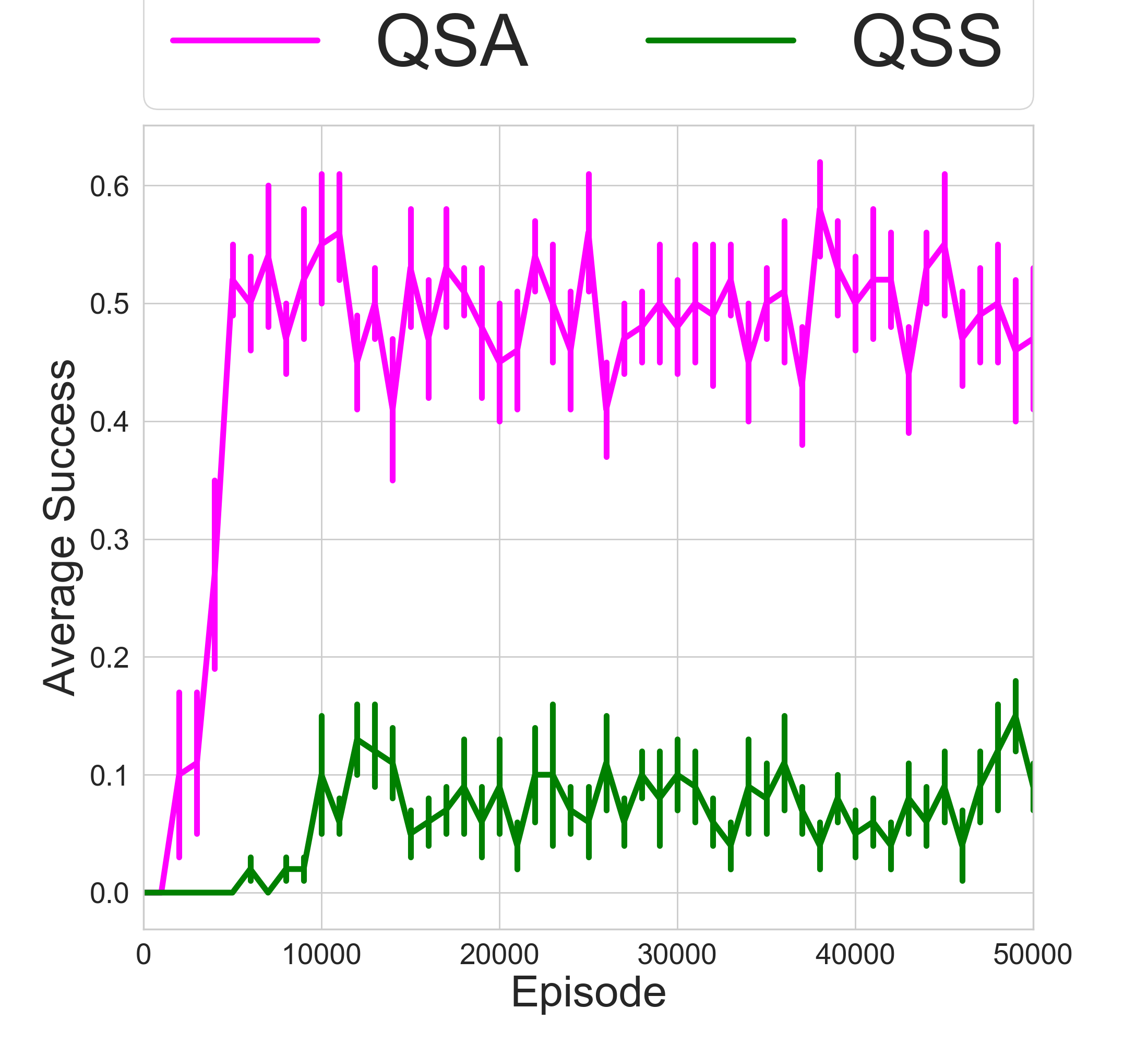}
  \end{subfigure}
    \caption{Stochastic experiments in cliffworld. This experiment measures the effect of stochastic actions on the average success rate.  Before each episode, we evaluated the learned policy and averaged the return over 10 trials. All experiments were averaged over 10 seeds with 95\% confidence intervals.}
     \label{fig:cliffworld}
\end{figure}
We were interested in measuring the impact of stochastic transitions on learning using QSS. To investigate this property, we add a probability of slipping to each transition, where the agent takes a random action (i.e. slips into an unintended next state) some percentage of time. Curiously, QSS solves this task quicker than QSA, even though it learns incorrect values (Figure~\ref{fig:stochastic_actions}). One hypothesis is that the slippage causes the agent to stumble into the goal state, which is beneficial for QSS because it directly updates values based on state transitions. The correct action that enables this transition is known using the given inverse dynamics model. QSA, on the other hand, would need to learn how the stochasticity of the environment affects the selected action's outcome and so the values may propagate more slowly.

We additionally study the case when stochasticity may lead to negative effects for QSS. We modify the gridworld to include a cliff along the bottom edge similar to the example in \citet{sutton1998reinforcement}. The agent is initialized on top of the cliff, and if it attempts to step down, it falls off and the episode is reset. Furthermore, the cliff is ``windy", and the agent has a $0.5$ probability of falling off the edge while walking next to it. The reward here is $0$ everywhere except the goal, which has a reward of $1$. Here, we see the effect of stochasticity is detrimental to QSS (Figure~\ref{fig:cliffworld}), as it does not account for falling and instead expects to transition towards the goal.

\subsection{Additional transfer experiment}
\begin{figure}[htb]
    \centering
    \includegraphics[width=.5\linewidth]{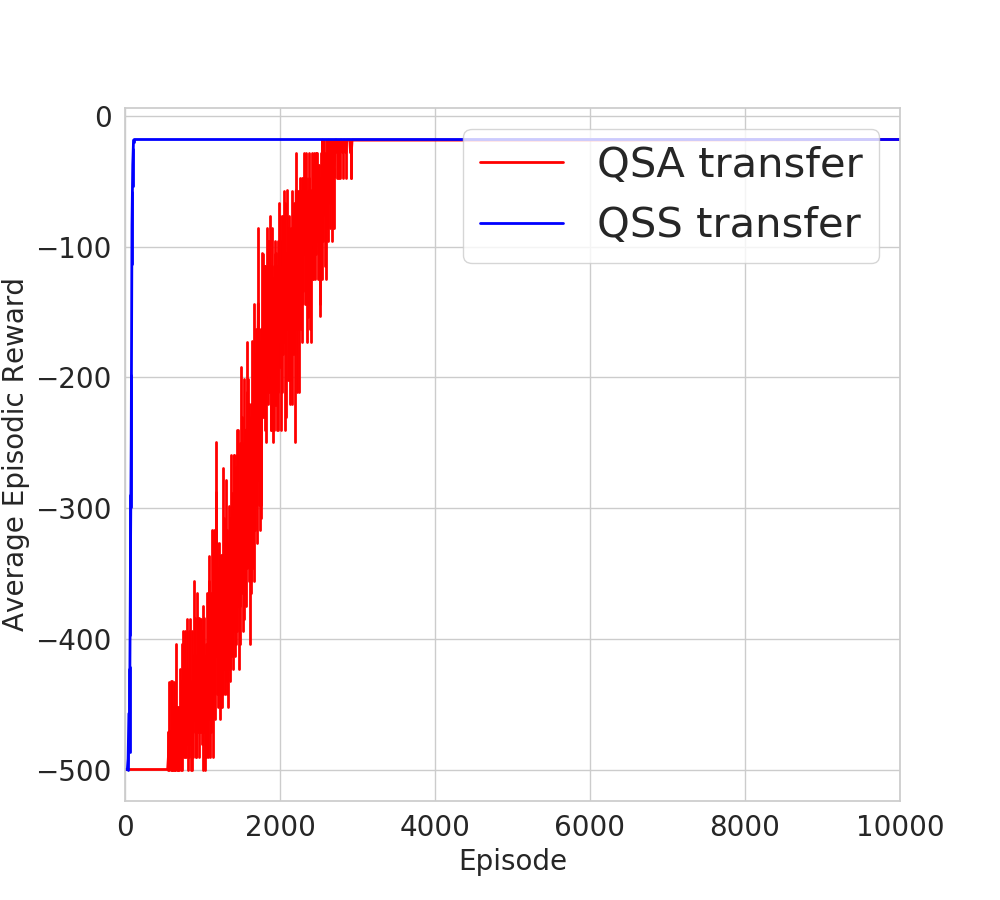}
    \caption{Transfer experiments within 11x11 gridworld.  The experiment represents how well QSS and QSA transfer to a gridworld with permuted actions. We now include an additional action that transports the agent back to the start. All experiments shown were averaged over 50 random seeds with 95\% confidence intervals
    }
    \label{fig:distinct_transfer}
\end{figure}
We trained QSA and QSS in a gridworld with an additional~\emph{transport} action that moved the agent back to the start. We then transferred the learned values to an environment where the action labels were shuffled. Incorrectly taking the transport action would have a larger impact on the average return than the other actions. QSS is able to learn much more quickly than QSA, as it only needs to relearn the inverse dynamics and avoids the negative impacts of the incorrectly labeled transport action.

\section{D3G Experiments}
\begin{table}[!htb]
\centering
\begin{tabular}{lllll}
\hline
\textbf{$\Theta$}        & \textbf{D3G} & \textbf{TD3} & \textbf{DDPG} & \textbf{BCO}\\
\hline
Critic lr           &  3e-4                         &  3e-4            &   3e-4  &  --         \\
Actor lr           &  --                        &  3e-4            &   3e-4   & --         \\
BC lr           &  --                        &  --            &   --   & 3e-4        \\
$\tau(s)$ lr        &  3e-4                         &  --            &   --  & --         \\
$f(s,\cdot)$ lr &      3e-4                 &      --        &       --    &  --   \\
$I(s,s')$ lr &          3e-4                 &        --       &        -- &  3e-4     \\
$\beta$                 &     1.0                      &   --           &   --    &   --      \\
$\eta$             &  0.005                        & 0.005             & 0.005      &   --    \\
Optimizer                      &  Adam                        &  Adam            & Adam   &  Adam          \\
Batch Size                     & 256                       &  256            &   256      &  256    \\
$\gamma$               & 0.99                         &  0.99            &  0.99  &  --         \\
Delay (d)            &   2                    &    2          &     --    &   --   \\ 
\hline
\end{tabular}
\caption{Hyperparameters $\Theta$ for D3G experiments.}
\label{tab:hyperparams}
\end{table}
We used the TD3 implementation from~\url{https://github.com/sfujim/TD3} for our experiments. We also used the ``OurDDPG" implementation of DDPG. We built our own implementation of D3G from this codebase. We used the default hyperparameters for all of our experiments, as described in Table~\ref{tab:hyperparams}. The replay buffer was filled for $10000$ steps before learning. All continuous experiments added noise $\epsilon \sim \mathcal{N}(0, 0.1)$ for exploration. In gridworld, the agent followed an $\epsilon$-greedy policy. Epsilon was set to $1$ and decayed to $.1$ by subtracting 9e-6 every time step. 

\subsection{Gridworld task}
We ran these experiments in an 11x11 gridworld. The state was the agent's $\langle x,y \rangle$ location on the grid. The agent was initialized to $\langle 0, 0 \rangle$ and received a reward of $-1$ until it reached the goal at $\langle 10, 10 \rangle$ and obtained a reward of $0$ and was reset to the initial position. The episode automatically reset after $500$ steps. 

\subsection{MuJoCo tasks}
We ran these experiments in the OpenAI Gym MuJoCo environment~\url{https://github.com/openai/gym}. We used gym==0.14.0 and mujoco-py==2.0.2. The agent's state was a vector from the MuJoCo simulator. 

\subsection{Learning from Observation Experiments}
We used TD3 to train an expert and used the learned policy to obtain demonstrations $D$ for learning from observation. We collected $1e6$ samples using the learned policy and took a random action either 0, 25, 50, 75, or 100 percent of the time, depending on the experiment. The samples consisted of the state, reward, next state, and done condition. 

We trained BCO with $D$ for $100$ iterations. During each iteration, we collected $1000$ samples from the environment using a Behavioral Cloning (BC) policy with added noise $\epsilon \sim \mathcal{N}(0, 0.1)$, then trained an inverse dynamics model for $10000$ steps, labeled the observational data using this model, then finally trained the BC policy with this labeled data for $10000$ steps.

We trained D3G with $D$ for $1e6$ time steps without any environment interactions. This allowed us to learn the model $\tau(s)$ which informed the agent of what state it should reach. Similarly to BCO, we used some environment interactions to train an inverse dynamics model for D3G. We ran this training loop for $100$ iterations as well. During each iteration, we collected $1000$ samples from the environment using the inverse dynamics policy $I(s,m(s))$ with added noise $\epsilon \sim \mathcal{N}(0, 0.1)$, then trained this model for $10000$ steps.

\section{Architectures}
\textbf{D3G Model $\tau(s)$}:

$s \rightarrow fc_{256} \rightarrow relu \rightarrow fc_{256}  \rightarrow relu \rightarrow fc_{len(s)}$

\textbf{D3G Forward Dynamics Model:}

$\langle s,a \rangle \rightarrow fc_{256} \rightarrow relu \rightarrow fc_{256}  \rightarrow relu \rightarrow fc_{len(s)}$

\textbf{D3G Forward Dynamics Model (Imitation):}

$\langle s,q \rangle \rightarrow fc_{256} \rightarrow relu \rightarrow fc_{256}  \rightarrow relu \rightarrow fc_{len(s)}$

\textbf{D3G Inverse Dynamics Model (Continuous):}

$\langle s,s' \rangle \rightarrow fc_{256} \rightarrow relu \rightarrow fc_{256}  \rightarrow relu \rightarrow fc_{len(a)} \rightarrow tanh \cdot$ max action

\textbf{D3G Inverse Dynamics Model (Discrete):}

$\langle s,s' \rangle \rightarrow fc_{256} \rightarrow relu \rightarrow fc_{256}  \rightarrow relu \rightarrow fc_{len(a)} \rightarrow softmax$

\textbf{D3G Critic:}
$\langle s,s' \rangle \rightarrow fc_{256} \rightarrow relu \rightarrow fc_{256}  \rightarrow relu \rightarrow fc_{l}$

\textbf{TD3 Actor:}

$s \rightarrow fc_{256} \rightarrow relu \rightarrow fc_{256}  \rightarrow relu \rightarrow fc_{len(a)} \rightarrow tanh \cdot$ max action

\textbf{TD3 Critic:}

$\langle s,a \rangle \rightarrow fc_{256} \rightarrow relu \rightarrow fc_{256}  \rightarrow relu \rightarrow fc_{l}$

\textbf{DDPG Actor:}

$s \rightarrow fc_{400} \rightarrow relu \rightarrow fc_{300}  \rightarrow relu \rightarrow fc_{len(a)} \rightarrow tanh \cdot$ max action

\textbf{DDPG Critic:}

$\langle s,a \rangle \rightarrow fc_{400} \rightarrow relu \rightarrow fc_{300}  \rightarrow relu \rightarrow fc_{l}$

\textbf{BCO Behavioral Cloning Model:}

$s \rightarrow fc_{256} \rightarrow relu \rightarrow fc_{256}  \rightarrow relu \rightarrow fc_{len(a)} \rightarrow tanh \cdot$ max action

\textbf{BCO Inverse Dynamics Model:}

$\langle s,s' \rangle \rightarrow fc_{256} \rightarrow relu \rightarrow fc_{256}  \rightarrow relu \rightarrow fc_{len(a)} \rightarrow tanh \cdot$ max action
\end{appendices}

\end{document}